\documentclass{article}


\usepackage[accepted]{icml2023}

\usepackage[utf8]{inputenc} 
\usepackage[T1]{fontenc}    
\usepackage{hyperref}       
\usepackage{url}            
\usepackage{array,booktabs}       
\usepackage{amsfonts}       
\usepackage{pifont}
\usepackage{nicefrac}       
\usepackage{microtype}      
\usepackage{cancel}

\usepackage{natbib}  
\bibliographystyle{abbrvnat}

\definecolor{midnightblue}{HTML}{0059b3}
\definecolor{noonblue}{HTML}{e5eef7}
\definecolor{chromered}{HTML}{f14233}
\definecolor{olivedrab}{HTML}{6b8e23}

\usepackage{hyperref} 
            \hypersetup{
                    colorlinks=true,                
                    breaklinks=true,                
                    urlcolor= black,                
                    linkcolor= midnightblue,                
                    bookmarksopen=false,
                    citecolor=black,
                    linkcolor=black,
                    filecolor=black,
                    citecolor=midnightblue,
                    linkbordercolor=white,
}

\usepackage{lipsum}

\usepackage{nicefrac}       
\usepackage{microtype}      
\usepackage{xcolor}
\usepackage{graphicx}
\usepackage{subfigure}
\usepackage{booktabs} 
\usepackage{times}    
\usepackage{amssymb,amsmath,amscd,amsfonts,amsthm,bbm,mathrsfs,yhmath}
\usepackage{bm}

\usepackage{caption}
\usepackage{graphics}
\usepackage{mathtools}
\usepackage[capitalize,noabbrev]{cleveref}
\usepackage{comment}

\usepackage{paralist,enumitem}                                       
\usepackage{pdfpages}

\usepackage[titletoc,toc,page]{appendix}
\usepackage{minitoc}

\newtheorem{theorem}{Theorem}
\newtheorem{lemma}{Lemma}
\newtheorem{corollary}{Corollary}
\newtheorem{proposition}{Proposition}
\newtheorem{definition}{Definition}
\newtheorem{remark}{Remark}


\usepackage[colorinlistoftodos,bordercolor=orange,backgroundcolor=orange!20,linecolor=orange,textsize=scriptsize]{todonotes}

\newcommand{\rmd}{{\mathrm d}}

\newcommand{\cF}{{\mathcal F}}

\newcommand{\cB}{{\mathcal B}}
\newcommand{\cO}{{\mathcal O}}

\newcommand{\cL}{{\mathcal L}}
\newcommand{\cP}{{\mathcal P}}

\newcommand{\bG}{{\mathbf G}}

\newcommand{\KL}[1]{H_{\pi}\left( #1\right)}
\newcommand{\FS}[1]{J_{\pi}\left( #1\right)}
\newcommand{\Exp}[1]{\mathbb{E}\left[ #1 \right]}
\newcommand{\dif}[1]{\frac{\rmd #1}{\rmd t}}

\newcommand{\bH}{\mathbf H}

\newcommand{\NN}{\mathbb N}
\newcommand{\RR}{\mathbb R}
\newcommand{\EE}{\mathbb E}
\newcommand{\BB}{\mathbb B}

\newcommand{\cQ}{\mathcal{Q}}

\newcommand{\norm}[1]{\left\| #1 \right\|}
\newcommand{\normsq}[1]{\left\| #1 \right\|^2}
\newcommand{\inner}[2]{\left< #1 , #2 \right>}

\newcommand{\brr}[1]{\left( #1 \right)}   
\newcommand{\brs}[1]{\left[ #1 \right]}  
\newcommand{\brc}[1]{\left\{ #1 \right\}} 

\begin{document}

\twocolumn[
\icmltitle{ELF: Federated Langevin Algorithms with Primal, Dual and Bidirectional Compression}



\icmlsetsymbol{equal}{*}

\begin{icmlauthorlist}
\icmlauthor{Avetik Karagulyan}{kaust}
\icmlauthor{Peter Richtárik}{kaust}
\end{icmlauthorlist}

\icmlaffiliation{kaust}{AI Initiative, King Abdullah University of Science and Technology, Thuwal, Saudi Arabia}

\icmlcorrespondingauthor{Avetik Karagulyan}{avetik.karagulyan@kaust.edu.sa}

\icmlkeywords{Machine Learning, ICML}

\vskip 0.3in
]

\printAffiliationsAndNotice{}  

  \begin{abstract}
    Federated sampling algorithms have recently gained great popularity in the community of machine learning and statistics. 
    This paper studies variants of such algorithms called Error Feedback Langevin algorithms (ELF). 
    In particular, we analyze the combinations of  EF21 and EF21-P with the federated Langevin Monte-Carlo. 
    We propose three algorithms: P-ELF, D-ELF, and B-ELF that use, respectively,  primal, dual, and bidirectional compressors. 
    We analyze the proposed methods under Log-Sobolev inequality and provide non-asymptotic convergence guarantees.

  \end{abstract}
\section{Introduction}

  Sampling from high-dimensional distributions is of colossal importance in modern statistics and machine learning. 
  The problem is particularly relevant in Bayesian inference \cite{robert2007bayesian}, where one deals with high-dimensional distributions that are hard to sample from.
  This paper focuses on  the sampling from the posteriors that arise in Bayesian federated learning 
  \cite{kassab2020federated,pmlr-vono22a-fed-lmc,liu2022wireless}. 
  Federated learning is a general machine learning scheme that assumes that the data is distributed amongst different devices/clients and  a central server controls them. 
  Such a setting occurs, for example, in mobile phone applications, where each device has its data, and a (slow) internet connection with the server \cite{konevcny2016federated,mcmahan2017communication-efficient}. 
  That is why, in the most cases the computational bottleneck is in the communication complexity. 
  The goal is to train a global model using local updates communicating as less information as possible.

  Mathematically, our problem is formulated as follows. 
  The target distribution $\pi$ is continuous and is defined on the Euclidean space $\RR^d$. 
  With abuse of notation, we will denote by $\pi$ also the density of the target:
  \begin{equation}\label{eq:density}
    \pi(x) \propto \exp(-F(x)).
  \end{equation}
  Here $F:\RR^d \rightarrow \RR$ is called a potential function. 
  In the general Bayesian setting $F$ is the log-posterior. 
  The latter is assumed to be sum-decomposable in the federated setting, where each component is stored 
  on one of the clients (or nodes/devices):
  \begin{equation*}
    F(x) = \frac{1}{n} \sum_{i=1}^n F_i(x).
  \end{equation*}
  The $i$-th node has access only to its respective score, that is, the gradient $\nabla F_i(x)$.
  Following this scheme, we introduce three sampling algorithms that combine Langevin Monte-Carlo with known 
  federated optimization techniques called EF21 \cite{richtarik2021ef21} and EF21-P \cite{sasha_kaja_EF-21P}: 
  \begin{itemize}
     \vspace{-.3cm}
     \item D-ELF: LMC with dual  compression (\Cref{sec:uplink}); 
     \vspace{-.2cm}
     \item  P-ELF: LMC with primal  compression (\Cref{sec:downlink});
     \vspace{-.2cm}
     \item B-ELF: LMC with bidirectional compression (\Cref{sec:bidir}).
     \vspace{-.3cm}
   \end{itemize} 
  The first algorithm uses only client-to-server (uplink) compression to reduce communication complexity. 
  This assumption was proposed in early federated learning papers, such as \cite{konevcny2016federated}. It was justified by the standard assumption that the uplink communication was more costly  compared to the server-to-client communication.
  However, in a more recent report from Speedtest.net\footnote{\url{https://www.speedtest.net/global-index}} we notice that the difference between uploading and downloading speeds is negligible  \cite{philippenko2020bidirectional}. 
  Therefor, downlink compression is as important as the uplink one.
  The second algorithm replicates the EF21 scheme for the primal space and thus applies compression to the  server-to-client communication. 
  Finally, the third algorithm has both uplink and downlink compression, hence the term bidirectional.
  Bidirectional federated optimization has recently been studied by several authors \cite{liu2020double,philippenko2020bidirectional,sasha_kaja_EF-21P}. 
  However, this setting have not yet been developed and studied for sampling problems. 
  In this work, we analyze the first federated sampling algorithm with bidirectional compression. 


  \subsection{Langevin sampling}

  A common way to solve this problem is based on discretizing a stochastic differential equation (SDE) called Langevin diffusion (LD).
  LD was initially designed to model the movement of particles in an environment with friction \cite{risken1996fokker}. 
  Mathematically, it is written as 
  \begin{equation*}
    \rmd L_t = - \nabla F(L_t)\rmd t + \sqrt{2}\rmd B_t,
  \end{equation*}
  where $B_t$ is the Brownian motion and $F$ is the potential function from \eqref{eq:density}. 
  The critical property of this SDE is that it has a solution and is ergodic under mild conditions. 
  Moreover, the target $\pi$ is its invariant distribution \cite{bhattacharya1978criteria}.
  Let us now define by $\rho_t$ the density of $L_t$. 
  Then, the evolution of $\rho_t$ is characterized by the Fokker-Planck equation corresponding to LD 
  \cite{pavliotis2014stochastic,risken1996fokker}:
  \begin{equation*}
    \frac{\partial \rho_t(x)}{\partial t} =  \nabla\cdot \brr{F(x) \rho_t(x)} + \Delta \rho_t(x).
  \end{equation*}
  Using the chain-rule in Fokker-Planck equation, one can verify that $\pi$ is indeed the stationary distribution for the Langevin diffusion.

  Langevin Monte-Carlo (LMC) is the Euler-Maruyama discretization of the Langevin diffusion \cite{parisi1981correlation}. 
  That is,
  \begin{equation}\label{eq:lmc}
    x_{k+1} = x_k - \gamma \nabla F(x_k) + \sqrt{2\gamma} Z_k,
  \end{equation}
  where $(Z_k)_k$ is a sequence of i.i.d. standard Gaussians on $\RR^d$ that are independent of previous iterations. 
  When the score function is Lipschitz continuous, and the target satisfies Log-Sobolev inequality (see \Cref{def:LSI}), the distribution of the $K$-th iterate converges to $\pi$ \cite{vempala2019rapid}. See \Cref{sec:related_work} for more context on the LMC.

  \subsection{EF21 and EF21-P}\label{sec:ef}

  The Error Feedback algorithm first appeared in a heuristic manner in the paper by \citet{seide20141}.
  It was proposed as a stabilization mechanism for supervised learning using contractive compressors.
  Later, \citet{alistarh2018convergence,stich2018sparsified} analyzed the method theoretically. 
  Nevertheless, the initial EF has numerous issues:
  \begin{itemize}
     \vspace{-.3cm}
    \item It does not generalize to the distributed setting, which is crucial to federated learning.
     \vspace{-.2cm}
    \item The convergence analysis requires unrealistic assumptions, such as bounds on the gradient norm.
     \vspace{-.3cm}
  \end{itemize}
  See also  Section 2 of \cite{horvath2020better} for more details on the shortcomings of the Error Feedback method.
  The EF21 (Error Feedback 21) algorithm is a modification of the original EF proposed by \citet{richtarik2021ef21}.
  The method follows the standard setting and compresses gradients before communicating them to the server.
  It essentially solves the above issues, and in particular, it is applicable to the distributed setting.
  The method is state of the art in terms of theory and practice amongst error feedback mechanisms \cite{fatkhullin2021ef21}.
  We refer the reader to \Cref{sec:uplink} for the exact definition and mathematical details of the EF21.

  EF21-P is a primal error-feedback method largely inspired from EF21.
  It can also be seen as a reparametrization of the original error feedback method. 
  The method is essentially the analog of EF21 on the primal space.
  Contrary to the dominating approach in federated learning \cite{konevcny2016federated,stich2018sparsified,mishchenko2019distributed,richtarik2021ef21,fatkhullin2021ef21}, it performs compression on iterates of the algorithm rather than their gradients.
  Hence, it reduces the downlink communication complexity. 
  In general, efficient server-to-client compression may play a key role when the model is extremely large \cite{dean2012large,brown2020language}. 
  Furthermore, according to \cite{sasha_kaja_EF-21P}, EF21-P can also be viewed as an iteration perturbation method. 
  These methods are used in various settings in machine learning, including generalization \cite{orvieto2022anticorrelated} and smoothing \cite{duchi2012randomized}. For a precise mathematical definition of the method, see \Cref{sec:downlink}.

  \subsection{Related work}\label{sec:related_work}

  In their seminal paper, \citet{roberts1996exponential} study the convergence properties of the LMC algorithm. 
  They argued that a bias occurs when discretizing the continuous SDE. 
  Thus, Langevin Monte-Carlo generates a homogeneous Markov chain whose stationary distribution differs from the target $\pi$. 
  They solve this issue with a Metropolis-Hastings adjustment step  at each iteration of the LMC, which modifies the chain to have $\pi$ as its stationary distribution.
  The resulting algorithm is called Metropolis Adjusted Langevin Algorithm (MALA), and it was studied by many \cite{roberts1998optimal,roberts2002langevin,xifara2014langevin,dwivedi2018log}.
  
  The bias of the LMC, however, depends on the discretization step $\gamma$. 
  \citet{dalalyan2017theoretical} proved a bound on this error.
  Thus, similar to the analysis of the SGD, controlling the step-size and taking enough iterations, one can make the error of the LMC algorithm smaller than any $\varepsilon$.
  Later, different properties of the LMC were studied by many 
   \cite{durmus2017nonasymptotic,cheng2018sharp,cheng2018convergence,dalalyan2019user,durmus2019high,vempala2019rapid}.

  Looking closely at \eqref{eq:lmc}, we observe its similarity with the gradient descent (GD) algorithm.
  In fact, \eqref{eq:lmc} is an instance of the stochastic gradient descent (SGD) with a Gaussian noise independent of the iterate.
  This similarity has been repetitively exploited in various settings for sampling problems 
  (see e.g. \cite{raginsky2017non,chatterji2018theory,wibisono2019proximal,salim2019stochastic,karagulyan2020penalized}). 
  In particular, a line of research has been initiated on federated  sampling Langevin algorithms, which combine LMC with existing optimization mechanisms: LMC+FedAvg \cite{mcmahan2017communication-efficient,deng2021convergence,plassier2022federated}, LMC+MARINA \cite{gorbunov2021marina, marina-langevin}, LMC+QSGD \cite{alistarh2017qsgd,pmlr-vono22a-fed-lmc}. Our work continues the logic of these papers by adding the error-feedback mechanisms EF21 and EF21-P to the classic LMC algorithm in the federated setting.

  As in the case of optimization, the strong convexity of the potential function plays an important role in the analysis of Langevin Monte-Carlo. 
  Non-convex optimization, however, has long been a central topic in the domain. 
  We refer the reader to \cite{jain2017non} for an overview of non-convex optimization in machine learning.
  In comparison, sampling from non-strongly log-concave distributions is less studied. 
  \citet{cheng2018sharp} studied convergence of the LMC when the potential strongly convex outside a ball.
  \citet{dalalyan2019bounding} and \cite{karagulyan2020penalized} proposed a penalization of the convex potential to make it strongly convex and gave convergence bounds depending on the penalty. 
  The analysis of MALA in the non-convex regime can be found in \cite{mangoubi2019nonconvex}. 
  However, these results either do not cover the general non-convex case or they require some conditions that scale poorly with the dimension. 
  A more efficient approach relies on isoperimetric inequalities. 
  It is known that isoperimetry implies rapid mixture of the continuous stochastic processes \cite{villani2008optimal}.
  Thus, one would assume that this property could be extended to their discretizations. 
  \citet{vempala2019rapid} proved the convergence of the LMC under Log-Sobolev inequality.  
  Later, \citet{marina-langevin} used this as a general scheme for LMC with stochastic gradient estimators in the context of federated Langevin sampling. 
  We simplify their proof and adapt it to our setting.

\subsection{Structure of the paper}

This paper is organized as follows. 
\Cref{sec:framework} describes the mathematical framework of the problem, the notation, definitions, and assumptions.
In Sections \ref{sec:uplink} and \ref{sec:downlink} we present respectively the downlink and uplink compressed Langevin algorithms. That is D-ELF and P-ELF. 
In \Cref{sec:bidir} we introduce our bidirectional federated Langevin algorithm: B-ELF.
The main convergence results are presented in  \Cref{sec:theorems}.
The analysis of all three methods are influenced by  \cite{vempala2019rapid} and \cite{marina-langevin}.  
We simplify their proofs and adapt it to our method; see \Cref{sec:proof-scheme} and \Cref{sec:proof_ef21-lmc}.  
We present the general proof scheme in \Cref{sec:proof-scheme}, while the actual proofs are postponed to the Appendix. We conclude the main part of the paper with \Cref{sec:conclusion}. 

\section{Problem setup}\label{sec:framework}

We denote by $\RR^d$ the $d$-dimensional Euclidean space endowed with its usual scalar product and 
${\ell}_2$-norm defined by 
$\langle \cdot, \cdot\rangle$ and $\norm{\cdot}$.
The gradient of the function $H$ and its Hessian evaluated at the point $x \in \RR^d$ is denoted by $\nabla H (x)$ and $\nabla^2 H(x)$, respectively.
As mentioned previously, we will repeatedly use the same notation for probability distributions and their corresponding densities.
For the asymptotic complexity of the algorithms we will use the $\cO$ and $\tilde{\cO}$ notations. We say that $f(t) = \cO(g(t))$ when $t \rightarrow +\infty$, if $f(t) \leq Mg(t)$, when $t$ is large enough. 
Similarly, $f(t) = \tilde{\cO}(g(t))$, if $f(t)\log(t) = \cO(g(t))$.

\subsection{Mathematical framework}

The vast majority of optimization and sampling literature relies on the $L$-smoothness assumption. 
\begin{definition}[$L$-smoothness]
  We say that a function is $L$-smooth, if 
  \begin{equation*}
    F(y) \leq F(x) + \inner{\nabla F(x)}{y-x} + \frac{L\normsq{x-y}}{2}. 
  \end{equation*}
\end{definition}
This, in particular, yields that the gradient is $L$-Lipschitz continuous function:
\begin{equation*}
   \norm{ \nabla F(y) - {\nabla F(x)} } \leq {L\norm{x-y}}. 
\end{equation*}

EF21 and EF21-P rely on contractive compressors to reduce the communication complexity. 
\begin{definition}[Contractive compressor]
  \label{def:compression}
  A stochastic mapping $\mathcal{Q}: \mathbb{R}^{d} \rightarrow \mathbb{R}^{d}$ is a contractive compression operator with a coefficient $\alpha \in (0,1]$ if for any $x \in \mathbb{R}^{d}$,
  \begin{equation*}
   \Exp{\|\mathcal{Q}(x)-x\|^{2}} 
    \leq (1 - \alpha)\|x\|^{2}.
  \end{equation*}
  We denote it shortly as $\cQ \in \BB(\alpha)$.
\end{definition}
Here, we notice that we do not require unbiasedness. 
In many federated learning algorithms, unbiased compressors with bounded variance are used (see e.g. \cite{konevcny2016federated,alistarh2017qsgd,mishchenko2019distributed,gorbunov2021marina}). 
Unbiased compressors are defined as (possibly stochastic) mappings such that $\EE[\cQ(x)] = x$ and $\EE[\normsq{\cQ(x) - x}] \leq \omega \normsq{x}$. Then, simple computation shows that $\frac{1}{\omega+1}\cQ$ is a $\frac{1}{\omega+1}$-contractive compressor.
However, the class of contractive compressors is strictly larger.
Indeed. Let us look at the Top-$k$ compressor \cite{alistarh2017qsgd}.
This compressor returns only the $k$ coordinates with the largest absolute values of the input vector.
It is obvious that Top-$k$ cannot be represented with unbiased compressors, as it is deterministic.
This, concludes the argument.

Our analysis relies on the interpretation of sampling as an optimization problem over the space of measures. In order to reformulate our problem, let us first recall the definition of the 
 Kullback-Leibler divergence.
\begin{definition}[Kullback-Leibler divergence]
  The Kullback-Leibler divergence between two probability measures $\nu$ and $\pi$ is defined as
   \begin{equation*}
    \KL{\nu} = 
    \begin{cases}
       \int_{\RR^d}  \log\left(\frac{\nu(x)}{\pi(x)}\right)\nu(x)\rmd x, \text{ if } \nu \ll \pi;\\
       +\infty, \text{ otherwise}.
    \end{cases}
  \end{equation*}
\end{definition}

We aim to construct approximate samples from $\pi$ with $\varepsilon$ accuracy. 
That is to sample from some other distribution $\nu$ such that $\KL{\nu} < \varepsilon$.
Alternatively, it means that we want to minimize the functional:
\begin{equation*}
  \min_{\nu \in \cP(\RR^d)} \KL{\nu}.
\end{equation*}
Indeed, the minimum of this functional is equal to zero and  is attained only when $\nu = \pi$.
Recall now the classical problem of optimization, that is minimizing a $H: \RR^d \rightarrow \RR$. \citet{polyak1963gradient} and \citet{lojasiewicz1963topological} independently proposed an inequality, which is weaker than strong convexity, but it nevertheless implies linear convergence of the gradient descent.
It is known under the joint name of Polyak-{\L}ojasiewicz inequality:
\begin{equation*}
  H(x) - \min_x H(x) \leq  \frac{1}{\mu} \normsq{\nabla H(x)},
\end{equation*}
assuming the objective has a minimum.
See \cite{karimi2016linear,khaled2020better} for more details on the PL inequality, as well as its comparison with other similar conditions for non-convex optimization.
In the problem of sampling, the objective functional is defined on the space of measures $\cP(\RR^d)$.
One can define the usual notions of differentiability and convexity on this space using the Wasserstein distance \cite{ambrosio2008gradient}.
Then, the Langevin Monte-Carlo algorithm becomes a first order minimization method for the KL divergence \cite{wibisono2018sampling}.
Furthermore, Fisher information takes the role of the square norm of the gradient.
\begin{definition}[Fisher information]
 The Fisher information of probability measures $\nu$ and $\pi$ is denoted by $\FS{\nu}$ and it is defined as below:
  \begin{equation*}
    \FS{\nu}:=
   \begin{cases}
       \int_{\RR^d}\normsq{\nabla \log\brr{\frac{\nu}{\pi}}}\nu(x)\rmd x, \text{ if } \nu \ll \pi;\\
       +\infty, \text{ otherwise}.
    \end{cases}
  \end{equation*}
\end{definition}
Since the minimum of our functional is equal to zero, the Log-Sobolev inequality (LSI) becomes the analog of PL inequality.
\begin{definition}[Log-Sobolev inequality]
    \label{def:LSI}
  We say that $\pi$ satisfies the Log-Sobolev inequality (LSI) with parameter $\mu$, if
  for every probability measure $\nu$ we have
  \begin{equation*}
    \KL{\nu}\leq \frac{1}{2\mu}\FS{\nu}.
  \end{equation*} 
\end{definition}

\citet{bakry1985diffusions} have shown that strongly log-concave distributions satisfy LSI.
Furthermore, from Holley-Stroock's theorem we know that sufficiently small perturbations of strongly concave distributions still satisfy LSI \cite{holley1986logarithmic}. 
The latter distributions can be non log-concave, which means that we deal with a strictly larger class of probability measures using LSI.  

Analyzing the sampling problems as an optimization problem on the Wasserstein space has been strongly influenced by the seminal paper of \citet{jordan1998variational}. It has later been developed in subsequent work; see e.g. \cite{wibisono2018sampling,durmus2018analysis}. 

We use Log-Sobolev inequality to derive bounds on the convergence error in KL divergence. 
This bounds can also be extended to other probability distance metrics.
\begin{definition}[TV]
    \label{def:TV}
  Let $\nu_1$ and $\nu_2$ be two measures defined on $(\RR^d,\cB)$, where $\cB$ is the class of all Borell sets.
  Then, the total variation distance of $\nu_1$ and $\nu_2$ is defined as
  \begin{equation*}
    TV(\nu_1,\nu_2) = \sup_{A \in \cB} \big| \nu_1(A) - \nu_2(A) \big|.
  \end{equation*} 
\end{definition}

The relation of TV and KL is established with Pinsker's inequality:
\begin{equation}
  TV(\nu_1,\nu_2) \leq \sqrt{\frac{1}{2} {H_{\nu_2}\left( \nu_1 \right)}}.
\end{equation}
Thus, the convergence in KL divergence implies convergence in TV.
Similar result is true for the Wasserstein-2 distance.
\begin{definition}[Wasserstein-2]
    \label{def:W2}
   Let  $\nu_1,\nu_2 \in \mathcal{P}_{2} (\RR^d)$. That is, their second moments are finite. 
  The Wasserstein-$2$ distance between two probability measures is defined as 
  \begin{equation*}
    W_2(\nu_1,\nu_2) := \inf _{\eta \in {\Gamma}(\nu_1, \nu_2)} \left[\int\|x-y\|^{2} 
    \eta(\rmd x, \rmd y)\right]^{1/2},
  \end{equation*}
  $\Gamma(\nu_1,\nu_2)$ is the set of all 
  joint distributions defined on $\RR^d \times \RR^d$ having $\nu_1$ and $\nu_2$ as its marginals (also known as couplings).
\end{definition}
It is known that LSI implies Talagrand's inequality. The latter bounds the $W_2$ distance with KL divergence:
\begin{equation}
    W_p(\nu,\pi)\leq \sqrt{\frac{2\KL{\nu}}{\mu}}, \quad \forall \nu \in\cP_2(\RR^d).
\end{equation}
Again, from the convergence in KL we can deduce convergence in $W_2$.

\begin{algorithm}[h!]
  \caption{D-ELF}\label{alg:ef21_langevin}
  \begin{algorithmic}[1]
    \STATE {\bfseries Input:} Starting point $x_0 \sim\rho_0 $, $g^i_k = g_k = \nabla F(x_0)$, step-size $h$, number of iterations $K$
    \FOR {$k=0,1,2,\ldots,K-1$}
    
    \STATE \underline{The server:} 
    \STATE \hspace{\algorithmicindent} draws a Gaussian vector $Z_{k}\sim\mathcal{N}(0,I_{d})$;
    \STATE \hspace{\algorithmicindent} computes $x_{k+1}=x_k-\gamma g_{k} \underline{\underline{+ \sqrt{2\gamma}Z_{k}}}$;
    \STATE  \hspace{\algorithmicindent}  broadcasts $x_{k+1}$;
    \STATE \underline{The devices in parallel:}  
    \STATE  \hspace{\algorithmicindent}  compute $\cQ^{\rm D}(\nabla F_i(x_{k+1}) - g^i_{k})$.
    \STATE \hspace{\algorithmicindent} compute $g^i_{k+1} = g^i_k + \cQ^{\rm D}(\nabla F_i(x_{k+1}) - g^i_k)$;
    \STATE \hspace{\algorithmicindent}  broadcast  $\cQ^{\rm D}(\nabla F_i(x_{k+1}) - g^i_k)$;  
    \STATE \underline{The server:} 
    \STATE   computes $ g_{k+1}  =  g_k + \frac{1}{n} \sum_{i=1}^{n} 
    \cQ^{\rm D}( \nabla F_i(x_{k+1}) - g^i_{k})$.
    \ENDFOR
    \STATE {\bfseries Return:} $x_K$
  \end{algorithmic}
\end{algorithm}
\section{The ELF algorithms}\label{sec:ef}

  In this section we present two federated Langevin Monte-Carlo algorithms. 
  We combine EF21 and EF21-P with LMC and provide a unified analysis for both methods. 
  At each iteration of the modified algorithms, we replace the gradient term $\nabla F(x_k)$ by $g_k$ where the latter is the gradient estimator from one of the corresponding error feedback method. 
  In other words, we take the original algorithms and insert an independent Gaussian noise at each update. See \Cref{alg:ef21_langevin} and \Cref{alg:p-elf} for details. 
  The difference between the optimization and sampling methods is underlined with a wave in the pseudocode.

  \subsection{Dual compression: D-ELF}\label{sec:uplink}

  The gradient estimator $g_k$ of the dual method is defined as the average of the vectors $g^i_k$,
  where each $g^i_k$ is computed on the $i$-th node and estimates the gradients $\nabla F_i(x_k)$. 
  The key component of this estimator is the contractive compression operator $\cQ^{\rm D} \in \BB(\alpha^{\rm D}) $. 
  At the zeroth iteration, $g_0 = \nabla F(x_0)$. 
  Then at iteration $k$, the server computes the new iterate $x_{k+1}=x_k-\gamma g_{k} {+ \sqrt{2\gamma}Z_{k}}$ and broadcasts it parallelly to all the nodes. 
  Each node  updates $g^i_k$ with the formula:
  \begin{equation*}
    g^i_{k+1} = g^i_k + {\cQ^{\rm D}(\nabla F_i (x_{k+1}) - g^i_k)},
  \end{equation*}
  and broadcasts the compressed term to the server.
  The server aggregates the received information and computes the estimator of $\nabla F_i (x_{k+1})$:
  \begin{equation*}
    g_{k+1} = g_k + \frac{1}{n}\sum_{i=1}^{n}\cQ^{\rm D}(\nabla F_i (x_{k+1}) - g^i_k).
  \end{equation*}
  For the pseudocode of the D-ELF, please refer to \Cref{alg:ef21_langevin}.

\subsection{Primal compression: P-ELF}\label{sec:downlink}

The construction of the P-ELF algorithm is similar to the D-ELF. 
In particular, we take the EF21-P algorithm by \citet{sasha_kaja_EF-21P} and add only the independent Gaussian term. See \Cref{alg:p-elf} for the complete definition. 
To better understand the comparison  of  the D-ELF and the P-ELF let us look at the simple one-node setting of the latter:
{
\begin{equation}
  \begin{cases}
    w_0 := \cQ^{\rm P}(x_0)\\
    w_{k+1} = w_k + \cQ^{\rm P}(x_{k+1} - w_k)\\
    x_{k+1} = x_k - \gamma \nabla F(w_{k}).\\
  \end{cases}
\end{equation}
Here, $x_0 \sim \rho_0$ is a random starting point and $(Z_k)_k$ is a sequence of i.i.d. 
standard Gaussians on $\RR^d$.
The auxiliary sequence $w_k$ is meant to estimate to the iterate $x_k$. 
We then use its gradient as the minimizing direction. 
The important difference with the EF21 is that we apply the compressor $\cQ^{\rm P}$ on the term $x_{k+1} - w_k$, instead of the gradient and its estimator. Hence, the letter "P"-primal in the name of the algorithm. 

\begin{algorithm}[h!]
  \caption{P-ELF}\label{alg:p-elf}
  \begin{algorithmic}[1]
    \STATE {\bfseries Input:} Starting point $x_0 = w_0 \sim\rho_0 $, step-size $h$, number of iterations $K$
    \FOR {$k=0,1,2,\cdots,K-1$}
    \STATE \underline{The server:} 
    \STATE \hspace{\algorithmicindent} draws a Gaussian vector $Z_{k}\sim\mathcal{N}(0,I_{d})$;
    \STATE \hspace{\algorithmicindent}  
    computes $\nabla F(w_{k})  = \frac{1}{n}\sum_{i=1}^{n} \nabla F_i(w_{k})$;
    \STATE \hspace{\algorithmicindent} computes $x_{k+1}=x_k-\gamma \nabla F(w_{k}) \underline{\underline{ + \sqrt{2\gamma}Z_{k}}}$;
    \STATE  \hspace{\algorithmicindent}  computes $\cQ^{\rm P}(x_{k+1}-w_k)$;
    \STATE  \hspace{\algorithmicindent} computes $w_{k+1} = w_k + \cQ^{\rm P}(x_{k+1}-w_k)$;
    \STATE  \hspace{\algorithmicindent}  broadcasts in parallel  $\cQ^{\rm P}(x_{k+1}-w_k)$.
    \STATE \underline{The devices in parallel:}  
    \STATE \hspace{\algorithmicindent} compute $w_{k+1} = w_k + \cQ^{\rm P}(x_{k+1}-w_k)$;
    \STATE \hspace{\algorithmicindent}  compute  $\nabla F_i(w_{k+1})$;
    \STATE \hspace{\algorithmicindent}  broadcast  $\nabla F_i(w_{k+1})$;
    \ENDFOR
    \STATE {\bfseries Return:} $x_K$
  \end{algorithmic}
\end{algorithm}

\subsection{Bidirectional compression: B-ELF}\label{sec:bidir}

This section focuses on the bidirectional setting. 
We propose the B-ELF algorithm. 
The algorithm uses EF21 for the uplink and EF21-P for the downlink compression.
The details are presented in \Cref{alg:bidir_langevin}.

\begin{algorithm}[h!]
  \caption{B-ELF}\label{alg:bidir_langevin}
  \begin{algorithmic}[1]
    \STATE {\bfseries Input:} Starting point $x_0 = w_0 \sim\rho_0 $, step-size $h$, number of iterations $K$,
    $g_0 = \nabla f(x_0)$, $g^i_0 = \nabla f_i(x_0)$.
    \FOR {$k=0,1,2,\cdots,K-1$}
    \STATE \underline{The server:} 
    \STATE \hspace{\algorithmicindent} draws a Gaussian vector $Z_{k}\sim\mathcal{N}(0,I_{d})$;
    \STATE \hspace{\algorithmicindent} computes $x_{k+1}=x_k- \gamma g_{k} + \sqrt{2\gamma}Z_{k}$;
    \STATE  \hspace{\algorithmicindent}  computes $v_k := \cQ^{\rm P}(x_{k+1}-w_k)$;
    \STATE  \hspace{\algorithmicindent} computes $w_{k+1} = w_k + v_k$;
    \STATE  \hspace{\algorithmicindent}  broadcasts $v_k$ in parallel to the devices;
    
    \STATE \underline{The device $i$ (in parallel for all $i=1,\ldots,n$):} 
    \STATE \hspace{\algorithmicindent} computes $w_{k+1} = w_k + v_k$;
    \STATE  \hspace{\algorithmicindent}  computes $h^i_{k+1} = \cQ^{\rm D}(\nabla F_i(w_{k+1}) - g^i_{k})$;
    \STATE \hspace{\algorithmicindent} computes $g^i_{k+1} = g^i_k + h^i_{k+1} $;
    \STATE \hspace{\algorithmicindent}  broadcasts  $h_i^{k+1}$;
    \STATE \underline{The server:} 
    \STATE \hspace{\algorithmicindent}  computes $g_{k+1} =  g_k + \frac{1}{n}\sum_{i=1}^{n} h^i_{k+1} $;
  
    \ENDFOR
    \STATE {\bfseries Return:} $x_K$
  \end{algorithmic}
\end{algorithm}

\section{Convergence of the methods}\label{sec:theorems}

\subsection{A unified analysis of D-ELF and P-ELF}\label{sec:delf-analysis}

  The key component of the analysis of both methods is defining proper a Lyapunov-type function.
  For the D-ELF algorithm we define by $\bG^{\rm D}_k$ the average squared estimation error of the vectors $g^i_k$:
  \begin{equation}\label{eq:gk-ef21}
    \bG^{\rm D}_k := \frac{1}{n} \sum_{i}^{n} \EE\brs{\norm{g^i_k - \nabla F_i(x_k)}^2}.
  \end{equation}
  As we will later in \Cref{sec:proof-scheme}, this quantity arises in the proof of the convergence rates. 
  Important property of the sequence $\bG_k$ is the following recurrent identity.
  \begin{proposition}\label{prop:uplink}
    Let $x_k$ be the iterates of the D-ELF, $g^i_k$ be the EF21 estimators and $\bG^{\rm D}_k$ be defined as \eqref{eq:gk-ef21}.
    Then the following recurrent  inequality is true:
    \begin{equation}\label{eq:prop-dual}
      \bG^{\rm D}_{k+1} \leq (1 - p)\bG^{\rm D}_k + (1 - p)\beta_{\rm D}  \EE\brs{\norm{x_{k + 1} - x_k}^2},
    \end{equation}
    where  
    \begin{equation*}
      \begin{aligned}
        p &:= 1 - (1-\alpha_{\rm D})(1+s_{\rm D}) >0 \\ 
        \bar{L} &:= \frac{1}{n}\sum_{i=1}^{n} L_i^2 \quad \text{and} \quad
        \beta_{\rm D} := \frac{1+s_{\rm D}^{-1}}{1 + s_{\rm D}}\bar{L},
      \end{aligned}
    \end{equation*} 
    for some $s_{\rm D}>0$.    
  \end{proposition}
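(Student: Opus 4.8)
The plan is to unfold the EF21 update for each coordinate error $e^i_{k+1} := g^i_{k+1} - \nabla F_i(x_{k+1})$ and bound its expected squared norm. Starting from the device update $g^i_{k+1} = g^i_k + \cQ^{\rm D}(\nabla F_i(x_{k+1}) - g^i_k)$, I would write
\begin{equation*}
  g^i_{k+1} - \nabla F_i(x_{k+1}) = \bigl(\cQ^{\rm D}(\nabla F_i(x_{k+1}) - g^i_k) - (\nabla F_i(x_{k+1}) - g^i_k)\bigr),
\end{equation*}
so that $\norm{g^i_{k+1} - \nabla F_i(x_{k+1})}^2$ is exactly the compression error applied to the vector $u := \nabla F_i(x_{k+1}) - g^i_k$. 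Taking conditional expectation and invoking \Cref{def:compression} with coefficient $\alpha_{\rm D}$ gives $\EE[\norm{e^i_{k+1}}^2 \mid \mathcal{F}_k] \leq (1-\alpha_{\rm D})\norm{\nabla F_i(x_{k+1}) - g^i_k}^2$.

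Next I would split $\nabla F_i(x_{k+1}) - g^i_k = (\nabla F_i(x_{k+1}) - \nabla F_i(x_k)) + (\nabla F_i(x_k) - g^i_k)$ and apply Young's inequality with parameter $s_{\rm D}>0$:
\begin{equation*}
  \norm{\nabla F_i(x_{k+1}) - g^i_k}^2 \leq (1+s_{\rm D})\norm{\nabla F_i(x_k) - g^i_k}^2 + (1+s_{\rm D}^{-1})\norm{\nabla F_i(x_{k+1}) - \nabla F_i(x_k)}^2.
\end{equation*}
Using $L_i$-Lipschitzness of $\nabla F_i$ on the last term yields $\norm{\nabla F_i(x_{k+1}) - \nabla F_i(x_k)}^2 \leq L_i^2 \norm{x_{k+1}-x_k}^2$. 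Combining, taking full expectation, averaging over $i = 1,\dots,n$, and recognizing $\frac1n\sum_i \EE\norm{\nabla F_i(x_k) - g^i_k}^2 = \bG^{\rm D}_k$ and $\frac1n\sum_i L_i^2 = \bar L$, I get
\begin{equation*}
  \bG^{\rm D}_{k+1} \leq (1-\alpha_{\rm D})(1+s_{\rm D})\,\bG^{\rm D}_k + (1-\alpha_{\rm D})(1+s_{\rm D}^{-1})\bar L\,\EE\norm{x_{k+1}-x_k}^2.
\end{equation*}
Finally I would rewrite the coefficients: the first is $1 - p$ by definition of $p$; for the second, note $(1-\alpha_{\rm D})(1+s_{\rm D}^{-1}) = (1-p)\cdot\frac{1+s_{\rm D}^{-1}}{1+s_{\rm D}}$, so the second coefficient equals $(1-p)\beta_{\rm D}$, which is the claimed inequality.

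I do not expect a serious obstacle here — the argument is the standard EF21 descent lemma transplanted verbatim, the only twist being that the iterate recursion $x_{k+1}=x_k-\gamma g_k+\sqrt{2\gamma}Z_k$ contains Gaussian noise, but this plays no role since the bound is conditional on $x_k, x_{k+1}$ and only $\EE\norm{x_{k+1}-x_k}^2$ enters. The one point requiring mild care is the filtration bookkeeping: the compression randomness at step $k$ must be independent of $\mathcal{F}_k$ (which contains $x_k, x_{k+1}$ and all past $g^i$), so that the tower property legitimately turns the conditional compression bound into a bound on $\bG^{\rm D}_{k+1}$; and one should check positivity of $p$ is exactly the hypothesis $1-(1-\alpha_{\rm D})(1+s_{\rm D})>0$ that makes the recursion contractive, which is assumed in the statement.
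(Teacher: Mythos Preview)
Your proposal is correct and follows essentially the same argument as the paper's proof: condition on $x_1,\ldots,x_{k+1}$, apply the contractive compressor bound to $\cQ^{\rm D}(\nabla F_i(x_{k+1})-g^i_k)-(\nabla F_i(x_{k+1})-g^i_k)$, split via Young's inequality with parameter $s_{\rm D}$, invoke $L_i$-Lipschitzness of $\nabla F_i$, average over $i$, and identify the constants $p$ and $\beta_{\rm D}$. Your filtration remark is exactly the point the paper handles by conditioning on $x_1,\ldots,x_{k+1}$ before applying the compressor inequality.
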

  The Lyapunov term associated to the P-ELF is a simple upper bound on $\bG^{\rm D}$. 
  We denote it by $\bG^{\rm P}_k$ and define with the formula below:
    \begin{equation}\label{eq:tk-def}
    \begin{aligned}
      \bG^{\rm P}_k := \bar{L}\Exp{\norm{w_k - x_k}^2},    
      \hspace{.2cm} \text{where} \hspace{.2cm}
      \bar{L} := \frac{1}{n}\sum_{i=1}^{n} L_i^2.
    \end{aligned}
    \end{equation}
  Indeed, $\bG^{\rm D}_k \leq \bG^{\rm P}_k$ due to $L_i$ smoothness of each component function $F_i$.
  See \eqref{eq:gd-gp} in \Cref{sec:proof_ef21-lmc} for the proof.
  The following proposition proves a recurrent identity similar to \eqref{eq:prop-dual}.
  \begin{proposition}\label{prop:downlink}
    Let $x_k$ and $w_k$ be defined as in P-ELF algorithm and $\bG^{\rm P}$ be its Lyapunov term.
    Then the following recurrent  inequality is true:
    \begin{equation*}
      \bG^{\rm P}_{k+1} 
      \leq (1 - p)\bG^{\rm P}_k + (1 - p)\beta_{\rm P} \EE\brs{\norm{x_{k + 1} - x_k}^2},
    \end{equation*}
    where  
    \begin{equation}
      \begin{aligned}
        p &:= 1 - (1-\alpha_{\rm P})(1+s_{\rm P}) >0, \\ 
        \beta_{\rm P} &:= \frac{1+s_{\rm P}^{-1}}{1 + s_{\rm P}}\bar{L},
      \end{aligned}
    \end{equation} 
    for some $s_{\rm P}>0$.
    
  \end{proposition}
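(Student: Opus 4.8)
The plan is to follow the same template as the proof of \Cref{prop:uplink}, but working directly with the primal displacement $w_k-x_k$, which is cleaner because P-ELF (even in its multi-device form, \Cref{alg:p-elf}) applies a single primal compressor $\cQ^{\rm P}$ at the server on the vector $x_{k+1}-w_k$, and the server and all devices subsequently share the \emph{same} updated anchor $w_{k+1}$. The computation therefore reduces to the one-node display, and the decomposition $\nabla F=\frac1n\sum_i\nabla F_i$ plays no role here. The key algebraic observation is
\[
  w_{k+1}-x_{k+1}=\big(w_k+\cQ^{\rm P}(x_{k+1}-w_k)\big)-x_{k+1}=\cQ^{\rm P}(x_{k+1}-w_k)-(x_{k+1}-w_k),
\]
i.e.\ $w_{k+1}-x_{k+1}$ is exactly the compression error of the vector $x_{k+1}-w_k$.

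Next I would condition on the $\sigma$-algebra $\mathcal{F}_k$ carrying all the randomness generated before $\cQ^{\rm P}$ is invoked at step $k$ --- in particular the Langevin increment $Z_k$, so that both $x_{k+1}$ and $w_k$ are $\mathcal{F}_k$-measurable while the internal randomness of $\cQ^{\rm P}$ is independent of $x_{k+1}-w_k$. Then \Cref{def:compression} (with coefficient $\alpha_{\rm P}$) gives
\[
  \Exp{\norm{w_{k+1}-x_{k+1}}^2 \mid \mathcal{F}_k}\le(1-\alpha_{\rm P})\,\norm{x_{k+1}-w_k}^2,
\]
and, taking total expectations, writing $x_{k+1}-w_k=(x_k-w_k)+(x_{k+1}-x_k)$, and applying Young's inequality with free parameter $s_{\rm P}>0$,
\[
  \Exp{\norm{w_{k+1}-x_{k+1}}^2}\le(1-\alpha_{\rm P})(1+s_{\rm P})\,\Exp{\norm{x_k-w_k}^2}+(1-\alpha_{\rm P})(1+s_{\rm P}^{-1})\,\Exp{\norm{x_{k+1}-x_k}^2}.
\]

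The last step is bookkeeping: multiply through by $\bar L$ to turn $\Exp{\norm{\cdot}^2}$ into $\bG^{\rm P}$, and match constants with the statement. Writing $1-p:=(1-\alpha_{\rm P})(1+s_{\rm P})$ identifies the first term as $(1-p)\bG^{\rm P}_k$, while
\[
  (1-\alpha_{\rm P})(1+s_{\rm P}^{-1})\,\bar L=\frac{1-p}{1+s_{\rm P}}\,(1+s_{\rm P}^{-1})\,\bar L=(1-p)\,\beta_{\rm P},
\]
which yields the claimed recursion; the hypothesis $p>0$ is simply the requirement that $s_{\rm P}$ be chosen small enough that $(1-\alpha_{\rm P})(1+s_{\rm P})<1$, which is possible since $\alpha_{\rm P}\in(0,1]$. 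There is no genuine difficulty in this argument --- the one place deserving care is the conditioning/measurability step that legitimizes applying the contractivity bound pointwise in $x_{k+1}-w_k$ (in particular that $\cQ^{\rm P}$'s randomness is drawn independently of the Langevin noise $Z_k$); everything after that is Young's inequality and coefficient matching.
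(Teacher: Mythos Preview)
Your proposal is correct and follows essentially the same approach as the paper's own proof: both identify $w_{k+1}-x_{k+1}$ as the compression error of $x_{k+1}-w_k$, apply the contractivity bound, decompose $x_{k+1}-w_k$ through $x_k$, and finish with Young's inequality. If anything, you are slightly more careful than the paper about the conditioning step and about matching the coefficient $(1-\alpha_{\rm P})(1+s_{\rm P}^{-1})\bar L$ to $(1-p)\beta_{\rm P}$.
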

The next theorem gives a unified bound for both D-ELF and P-ELF. 
For the sake of space we use a general notation M-ELF, where $\text{ M} \in \brc{\text{D,P}}$. 
This means, for example, that the M-ELF refers to the D-ELF when $\text{M} = \text{D}$.
\begin{theorem}
  \label{thm:ef21-lmc}
  Assume that LSI holds with constant $\mu > 0$ and let $x_k$ be the iterates  of the M-ELF algorithm, where $\text{M} \in \brc{\text{D,P}}$.   We denote by  $\rho_k := \cL({x_k})$ for every $k \in \NN$. 
  If 
  \begin{equation*}
    0<\gamma\leq \min \left\{\frac{1}{14}\sqrt{\frac{p}{ (1+\beta_{\rm M})}},\frac{p}{6\mu}, \frac{1}{2\sqrt{2}L}\right\},
  \end{equation*}
  then the following is true for the KL error of the M-ELF algorithm:
  \begin{equation*} 
     \KL{\rho_{K}}\leq e^{-\mu K \gamma}\Psi+\frac{\tau}{\mu},
  \end{equation*}
  where 
   { 
   \begin{align*}
      p &:= 1 - (1-\alpha_{\rm M})(1+s_{\rm M}) >0,       \\ 
      \Psi &=\KL{\rho_0}+\frac{1-e^{-\mu \gamma}}{\mu}C\bG^{\rm M}_{0}, \\
      \tau &=\left(2{L^2}+C(1-p) \beta_{\rm M}\right)\left(16 \gamma^2d+4d\gamma\right),\\
      C &=\frac{8{L^2}\gamma^2+2}{e^{-\mu \gamma}-(1-p)\left(4\gamma^2\beta_{\rm M}+1\right)}.
    \end{align*}}
\end{theorem}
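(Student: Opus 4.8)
The plan is to adapt the interpolation argument of \citet{vempala2019rapid} to the inexact compressed gradient $g_k$, derive a one-step contraction of $\KL{\rho_k}$ up to error terms expressible through the Lyapunov quantity $\bG^{\rm M}_k$, and then fold in the recurrences of \Cref{prop:uplink} and \Cref{prop:downlink} by means of a single Lyapunov functional. The first step is to note that in both variants the bias of $g_k$ is controlled uniformly by the Lyapunov term: for D-ELF, $g_k=\frac1n\sum_i g^i_k$, so convexity of $\normsq{\cdot}$ gives $\Exp{\normsq{g_k-\nabla F(x_k)}}\le\frac1n\sum_i\Exp{\normsq{g^i_k-\nabla F_i(x_k)}}=\bG^{\rm D}_k$; for P-ELF, $g_k=\frac1n\sum_i\nabla F_i(w_k)$, so convexity together with the $L_i$-smoothness of $F_i$ gives $\Exp{\normsq{g_k-\nabla F(x_k)}}\le\frac1n\sum_i L_i^2\,\Exp{\normsq{w_k-x_k}}=\bG^{\rm P}_k$. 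Hence one may write $\Exp{\normsq{g_k-\nabla F(x_k)}}\le\bG^{\rm M}_k$ throughout, ${\rm M}\in\{{\rm D},{\rm P}\}$.

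Next I would establish a one-step displacement bound. Since $Z_k$ is centred and independent of $(x_k,g_k)$, $\Exp{\normsq{x_{k+1}-x_k}}=\gamma^2\Exp{\normsq{g_k}}+2\gamma d$. Writing $\nabla F(x_k)=\nabla\log\tfrac{\rho_k}{\pi}(x_k)-\nabla\log\rho_k(x_k)$ and using the previous paragraph, $\Exp{\normsq{g_k}}\lesssim\bG^{\rm M}_k+\FS{\rho_k}+\Exp{\normsq{\nabla\log\rho_k(x_k)}}$; and since $\rho_k$ is the law of a variable of the form $(\,\cdot\,)+\sqrt{2\gamma}\,Z_{k-1}$, it is a Gaussian convolution, so the convexity of the Fisher information with respect to Lebesgue measure bounds $\Exp{\normsq{\nabla\log\rho_k(x_k)}}\lesssim d/\gamma$. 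This produces a bound of the form $\Exp{\normsq{x_{k+1}-x_k}}\le 4\gamma^2\FS{\rho_k}+4\gamma^2\bG^{\rm M}_k+(16\gamma^2+4\gamma)d$ with the constants of the statement.

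For the one-step descent I would run, over $t\in[0,\gamma]$, the auxiliary diffusion $\rmd y_t=-g_k\,\rmd t+\sqrt{2}\,\rmd B_t$ with $y_0=x_k$ and $g_k$ frozen, so that $y_\gamma$ has law $\rho_{k+1}$; its Fokker--Planck equation is $\partial_t\rho_t=\nabla\cdot(\rho_t b_t)+\Delta\rho_t$ with $b_t(y)=\Exp{g_k\mid y_t=y}$, and differentiating $\KL{\rho_t}$ and integrating by parts gives
\begin{equation*}
  \dif{}\KL{\rho_t}=-\FS{\rho_t}-\Exp{\inner{g_k-\nabla F(y_t)}{\nabla\log\tfrac{\rho_t}{\pi}(y_t)}}.
\end{equation*}
Splitting $g_k-\nabla F(y_t)=(g_k-\nabla F(x_k))+(\nabla F(x_k)-\nabla F(y_t))$, applying Young's inequality and $L$-smoothness, using the uniform bias bound, and using that $t\mapsto\Exp{\normsq{y_t-x_k}}$ is nondecreasing, one obtains
\begin{equation*}
  \dif{}\KL{\rho_t}\le-\tfrac34\FS{\rho_t}+2\bG^{\rm M}_k+2L^2\Exp{\normsq{x_{k+1}-x_k}}.
\end{equation*}
Inserting the displacement bound, invoking LSI in the form $\FS{\rho_t}\ge2\mu\KL{\rho_t}$ while retaining a fraction of $-\FS{\rho_t}$ to absorb the $\FS{\rho_k}$ term just produced (legitimate for small $\gamma$ because the Fisher information varies slowly along one step, and is where $\gamma\le\tfrac1{2\sqrt2 L}$ and $\gamma\le\tfrac{p}{6\mu}$ enter), and integrating via Grönwall over $[0,\gamma]$ yields a recursion of the shape $\KL{\rho_{k+1}}\le e^{-\mu\gamma}\KL{\rho_k}+(8L^2\gamma^2+2)\bG^{\rm M}_k+\big(2L^2+\cdots\big)(16\gamma^2+4\gamma)d$.

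Finally, set $\Phi_k:=\KL{\rho_k}+c\,\bG^{\rm M}_k$ with $c=\tfrac{1-e^{-\mu\gamma}}{\mu}C$, and add $c$ times the recurrence of \Cref{prop:uplink} (resp.\ \Cref{prop:downlink})---with $\Exp{\normsq{x_{k+1}-x_k}}$ replaced by the displacement bound---to the $\KL$-recursion; the $\FS{\rho_k}$ contributions cancel against the retained dissipation, and $C$ is forced by requiring the $\bG^{\rm M}_k$-coefficients to match, i.e.\ $C(1-p)(4\gamma^2\beta_{\rm M}+1)+(8L^2\gamma^2+2)=C\,e^{-\mu\gamma}$, which has the stated positive solution exactly when $e^{-\mu\gamma}>(1-p)(4\gamma^2\beta_{\rm M}+1)$---guaranteed by $\gamma\le\tfrac{p}{6\mu}$ and $\gamma\le\tfrac1{14}\sqrt{p/(1+\beta_{\rm M})}$. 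This gives $\Phi_{k+1}\le e^{-\mu\gamma}\Phi_k+\tfrac{1-e^{-\mu\gamma}}{\mu}\tau$; iterating and summing $\sum_{j\ge0}e^{-\mu j\gamma}=(1-e^{-\mu\gamma})^{-1}$ yields $\KL{\rho_K}\le\Phi_K\le e^{-\mu K\gamma}\Phi_0+\tau/\mu$ with $\Phi_0=\Psi$. The main obstacle is the interplay of the last two steps: engineering the one-step $\KL$-descent so that every error term lies in the span of $\bG^{\rm M}_k$, $\FS{\rho_t}$ and $d$---in particular controlling $\Exp{\normsq{g_k}}$ without circularity via the Gaussian-convolution structure of $\rho_k$---and then checking that, after LSI is applied, the leftover $-\FS{\rho_t}$ still dominates the $\FS{\rho_k}$ terms coming both from the displacement bound and from the $\bG^{\rm M}$-recurrence; this domination is precisely what pins down the three upper bounds on $\gamma$.
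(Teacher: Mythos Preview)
Your overall architecture---interpolating SDE, Fokker--Planck descent, LSI, and the Lyapunov combination with $\bG^{\rm M}_k$---matches the paper's, and your derivation of $C$ from the linear equation $(8L^2\gamma^2+2)+C(1-p)(4\gamma^2\beta_{\rm M}+1)=Ce^{-\mu\gamma}$ is exactly how the paper fixes it. The substantive gap is in the displacement bound. You control $\Exp{\normsq{x_{k+1}-x_k}}$ through $\FS{\rho_k}$, the relative Fisher information at the \emph{initial} instant of the interpolation, whereas the dissipation in the differential inequality is $-\tfrac34\FS{\rho_t}$ at the \emph{running} time $t\in[0,\gamma]$. To merge them you invoke ``the Fisher information varies slowly along one step,'' but no quantitative estimate of the form $\FS{\rho_0}\le c\,\FS{\rho_t}$ is supplied, and none is obvious in the direction you need. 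Separately, the Gaussian-convolution argument $\Exp{\normsq{\nabla\log\rho_k}}\lesssim d/\gamma$ fails at $k=0$, where $\rho_0$ is an arbitrary initial law with no smoothing.

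The paper avoids both issues in one move. In \Cref{lem:exp_x_k+1_x_k} it bounds $\Exp{\normsq{x_{k+1}-x_k}}$ in terms of $\Exp{\normsq{\nabla F(y_t)}}$ at the running time $t$ (insert $\pm\nabla F(y_t)$, use $L$-smoothness and the monotonicity of $t\mapsto\Exp{\normsq{y_t-x_k}}$, and solve the resulting self-referential inequality; this is where $\gamma\le\tfrac{1}{2\sqrt2 L}$ enters), and then applies \Cref{lem:Chew}, namely $\EE_{\rho_t}\big[\normsq{\nabla F}\big]\le\FS{\rho_t}+2dL$. Now the positive Fisher term lives at the same time $t$ as the negative one and is absorbed pointwise in $t$, with no regularity-in-$t$ argument; this is precisely what the constraint $\gamma\le\tfrac{1}{14}\sqrt{p/(1+\beta_{\rm M})}$ closes. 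Replace your Gaussian-convolution route by ``compare to $\nabla F(y_t)$ and use \Cref{lem:Chew}''; the rest of your plan then goes through verbatim.
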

We refer the reader to \Cref{sec:proof_ef21-lmc} for the proof of the theorem.
The right-hand side consists of two terms. 
The first term corresponds to the convergence error, while the second term is the bias that comes from the discretization.
To make the error small, one would first need to choose $\gamma$ small enough so that $\tau/\mu < \varepsilon$. 
Then, the number of iterations are chosen to be of order $\tilde{\cO}(\nicefrac{1}{\mu\gamma})$.
See \Cref{sec:discussion} for more on the complexity of the D-ELF and P-ELF.

\begin{table}[t]
  \caption{In this table we compare error-feedback methods in optimization and sampling. 
  The rates are computed in the case when $\alpha_{\rm D} = \alpha_{\rm P} = \alpha$. }
  \label{table1}
  \begin{center}
    \begin{scriptsize}
    \begin{sc}
      \begin{tabular}{lccl}
      \toprule
        Method &  \hspace{-.3cm}  Assumption  & \hspace{-.3cm} Complexity &\hspace{-.3cm}  Reference \\ 
        \midrule
        GD    & \hspace{-.3cm} $\mu$-s.c. & \hspace{-.3cm}  $\tilde{\cO}\brr{\frac{dL}{\mu\varepsilon}}$ & \hspace{-.3cm} \cite{nesterov2013introductory}\\ \vspace{.15cm}
        EF21 &  \hspace{-.3cm} $\mu$-s.c.& \hspace{-.3cm}  $\tilde{\cO}\brr{\frac{L}{\alpha\mu\varepsilon}}$ &\hspace{-.3cm}  \cite{richtarik2021ef21}\\\vspace{.15cm}
        EF21-P  &  \hspace{-.3cm} $\mu$-s.c.& \hspace{-.3cm}  $\tilde{\cO}\brr{\frac{L}{\alpha\mu\varepsilon}}$ & \hspace{-.3cm} \cite{sasha_kaja_EF-21P} \\
        \midrule
        \vspace{.15cm}
        LMC    & \hspace{-.3cm}   $\mu$-LSI  & \hspace{-.3cm}    $\tilde{\cO}\brr{\frac{L^2 d}{\mu^2\varepsilon}}$ &\hspace{-.3cm}  \cite{vempala2019rapid}     \\\vspace{.15cm}
        D-ELF    &  \hspace{-.3cm}   $\mu$-LSI  &  \hspace{-.3cm} $\tilde{\cO}\brr{\frac{\bar{L}d }{\alpha^2\mu^2\varepsilon}}$  &\hspace{-.3cm}  \Cref{corr:delf-conv}\\\vspace{.15cm}
        P-ELF    &  \hspace{-.3cm}   $\mu$-LSI  &  \hspace{-.3cm} $\tilde{\cO}\brr{\frac{\bar{L}d }{\alpha^2\mu^2\varepsilon}}$  &\hspace{-.3cm}  \Cref{corr:delf-conv}\\\vspace{.15cm}
        B-ELF     &  \hspace{-.3cm}  $\mu$-LSI  & \hspace{-.3cm}   $\tilde{\cO}\brr{\frac{\bar{L} d}{\alpha^4\mu^2\varepsilon}}$& \hspace{-.3cm} \Cref{corr:belf-conv}  \\
      \bottomrule
      \end{tabular}
    \end{sc}
    \end{scriptsize}
  \end{center}
  \vskip -0.1in
\end{table}

\subsection{Convergence analysis of the B-ELF}\label{sec:belf-analysis}

The Lyapunov term for the B-ELF algorithm is the as for the D-ELF, that is  $\bG_k^{\rm D}$.
However, the recurrent identity of \Cref{prop:uplink} is not valid in this case.
Instead, another bound is true which  includes the term $\bG^{\rm P}_k$.  
The latter arises because of the downlink compression.
We present \textit{informally} the new recurrent inequality.
We refer the reader to \Cref{prop:bidir} in the Appendix for the complete statement.
\begin{proposition}[Informal]\label{prop:bidir_incom}
  If $x_{k}$ are the iterations of \Cref{alg:bidir_langevin}, $\bG^{\rm D}_k$ and $\bG^{\rm P}_k$ are defined as in \eqref{eq:gk-ef21} and \eqref{eq:tk-def}, then  
  \begin{equation*}
      \bG^{\rm D}_{k+1} \leq \lambda_1 \bG^{\rm D}_k + \lambda_2 \EE\brs{\norm{ x_{k}-  x_{k+1}}^2} + \lambda_3 \bG^{\rm P}_k,
  \end{equation*}
  where  $\lambda_1,\lambda_2$ and $\lambda_3$ are positive numbers. 
\end{proposition}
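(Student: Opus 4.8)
The plan is to follow the proof of \Cref{prop:uplink} but to account for the one structural difference in \Cref{alg:bidir_langevin}: the devices evaluate their local gradients at the compressed iterate $w_{k+1}$, not at $x_{k+1}$, so the estimation error $g^i_{k+1}-\nabla F_i(x_{k+1})$ has to be routed through the intermediate point $\nabla F_i(w_{k+1})$, and the $w$-versus-$x$ discrepancy produced this way is exactly what gives rise to the $\bG^{\rm P}$ term.

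First I would split each summand of $\bG^{\rm D}_{k+1}=\frac1n\sum_{i=1}^n\EE\brs{\norm{g^i_{k+1}-\nabla F_i(x_{k+1})}^2}$ with Young's inequality and a free parameter $t>0$:
\begin{equation*}
  \norm{g^i_{k+1}-\nabla F_i(x_{k+1})}^2 \leq (1+t)\norm{g^i_{k+1}-\nabla F_i(w_{k+1})}^2 + (1+t^{-1})\norm{\nabla F_i(w_{k+1})-\nabla F_i(x_{k+1})}^2 .
\end{equation*}
By $L_i$-smoothness the second term is at most $(1+t^{-1})L_i^2\norm{w_{k+1}-x_{k+1}}^2$, which averages to $(1+t^{-1})\bG^{\rm P}_{k+1}$. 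For the first term, $g^i_{k+1}=g^i_k+\cQ^{\rm D}(\nabla F_i(w_{k+1})-g^i_k)$ means that $g^i_{k+1}-\nabla F_i(w_{k+1})$ is precisely the compression error of $\cQ^{\rm D}$ on the input $\nabla F_i(w_{k+1})-g^i_k$; conditioning on the iterates through $w_{k+1}$ and using \Cref{def:compression}, then taking total expectation, gives $\EE\brs{\norm{g^i_{k+1}-\nabla F_i(w_{k+1})}^2}\leq(1-\alpha_{\rm D})\EE\brs{\norm{\nabla F_i(w_{k+1})-g^i_k}^2}$.

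Next I would insert $\nabla F_i(x_k)$ into the last norm and apply Young's inequality with the parameter $s_{\rm D}>0$ from \Cref{prop:uplink}, obtaining after averaging $(1-\alpha_{\rm D})\brs{(1+s_{\rm D})\bG^{\rm D}_k+(1+s_{\rm D}^{-1})\bar L\,\EE\brs{\norm{w_{k+1}-x_k}^2}}$, and finally $\norm{w_{k+1}-x_k}^2\leq 2\norm{w_{k+1}-x_{k+1}}^2+2\norm{x_{k+1}-x_k}^2$, so this contribution again splits into a multiple of $\bG^{\rm P}_{k+1}$ and a multiple of $\EE\brs{\norm{x_{k+1}-x_k}^2}$. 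Collecting the pieces bounds $\bG^{\rm D}_{k+1}$ by an explicit positive combination of $\bG^{\rm D}_k$, $\EE\brs{\norm{x_{k+1}-x_k}^2}$ and $\bG^{\rm P}_{k+1}$; I then replace $\bG^{\rm P}_{k+1}$ using \Cref{prop:downlink}, i.e.\ $\bG^{\rm P}_{k+1}\leq(1-p)\bG^{\rm P}_k+(1-p)\beta_{\rm P}\EE\brs{\norm{x_{k+1}-x_k}^2}$ --- this bound still holds in \Cref{alg:bidir_langevin} because its proof uses only the downlink $w$-recursion and $L_i$-smoothness and is insensitive to how $x_{k+1}$ is formed. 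This yields the claimed inequality with $\lambda_1,\lambda_2,\lambda_3>0$; up to the free parameters, $\lambda_1\asymp(1+t)(1-\alpha_{\rm D})(1+s_{\rm D})$, $\lambda_3$ is the accumulated coefficient of $\bG^{\rm P}_{k+1}$ multiplied by $(1-p)$, and $\lambda_2$ gathers the remaining $\EE\brs{\norm{x_{k+1}-x_k}^2}$ terms.

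I expect the main obstacle to be the bookkeeping around the $w$-versus-$x$ mismatch: one must ensure that every discrepancy of the form $\norm{w_{k+1}-x_{k+1}}^2$ is consistently identified with $\bG^{\rm P}_{k+1}$ and then shifted back to index $k$ through \Cref{prop:downlink}, and one must check that $t$, $s_{\rm D}$ (and the downlink parameter $s_{\rm P}$ hidden inside $p$ and $\beta_{\rm P}$) can be chosen so that the three coefficients are genuinely positive. Since the informal statement only asks for existence of positive $\lambda_i$, this reduces to routine algebra; the sharper choice that makes $\lambda_1<1$ --- which is what the subsequent convergence analysis of B-ELF actually needs --- is what is recorded in the full statement \Cref{prop:bidir}.
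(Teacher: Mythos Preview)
Your proposal is correct and follows essentially the same route as the paper's proof of \Cref{prop:bidir}: split through $\nabla F_i(w_{k+1})$, apply contractivity of $\cQ^{\rm D}$, insert $\nabla F_i(x_k)$ via another Young split, bound $\norm{w_{k+1}-x_k}^2$ by $\norm{x_{k+1}-x_k}^2$ and $\bG^{\rm P}_{k+1}$, and finally invoke \Cref{prop:downlink} to pass from $\bG^{\rm P}_{k+1}$ to $\bG^{\rm P}_k$. The only cosmetic difference is that the paper keeps the last Young parameter free (their $u$) rather than fixing it to $1$ as you do with the factor $2$, and names the parameters $s,q,u,w$ instead of your $t,s_{\rm D}$ plus the implicit downlink parameter.
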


\begin{theorem}\label{thm:bidir}
  Assume that LSI holds with constant $\mu > 0$ and let $x_k$ be the iterates  of the B-ELF algorithm.  
  We denote by  $\rho_k := \cL({x_k})$ for every $k \in \NN$. 
  Let the step-size the following condition:  
  \begin{equation*}
    \begin{aligned}
      \gamma 
      &\leq  \min\brc{\frac{\alpha_{\rm D}}{4\mu},\frac{\alpha_{\rm P}}{4\mu}, \frac{\alpha_{\rm D}\alpha_{\rm P}}{495\sqrt{\brr{1-\frac{\alpha_{\rm D}}{2} }\brr{1-\frac{\alpha_{\rm P}}{2}} \bar{L}}}}.
    \end{aligned}
  \end{equation*}
  Then, for every $K \in \NN$,
  \begin{equation*}
    \begin{aligned}
      {\KL{\nu_K}}  
      & \leq e^{-\mu \gamma K} \brs{ \KL{\rho_0} 
      +\frac{1 }{\mu} \brr{  C \bG^{\rm D}_0 +   D\bG^{\rm P}_0} } + \frac{\tau }{\mu}, 
    \end{aligned}
  \end{equation*}
  where $C,D>0$ are constants depending on the parameters of the algorithm and 
  \begin{equation*}
    \begin{aligned}
      C &= \frac{2.125}{e^{-\mu \gamma} -\lambda_1}, \\
      D &=\frac{ C\lambda_3}{ {e^{-\mu \gamma}  - (1-\alpha_{\rm P}) (1+w) }},\\
      \tau &= \brr{2L^2 + \frac{5C\lambda_2}{\alpha_{\rm P}}}\brr{16\gamma^2{dL}+ 4d\gamma}.
    \end{aligned}
  \end{equation*}
\end{theorem}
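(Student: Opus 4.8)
The plan is to reuse the Lyapunov scheme behind \Cref{thm:ef21-lmc} and \Cref{sec:proof-scheme}, but with a \emph{three-term} potential that also monitors the primal error $\bG^{\rm P}_k$ of \eqref{eq:tk-def}. I would work with $\Phi_k := \KL{\rho_k} + \tfrac{C}{\mu}\bG^{\rm D}_k + \tfrac{D}{\mu}\bG^{\rm P}_k$ for constants $C,D>0$ to be fixed, and aim to prove the one-step contraction $\Phi_{k+1}\le e^{-\mu\gamma}\Phi_k + \brr{1-e^{-\mu\gamma}}\tfrac{\tau}{\mu}$; unrolling this scalar recursion and using $\KL{\rho_K}\le\Phi_K$ together with $\bG^{\rm D}_K,\bG^{\rm P}_K\ge 0$ then yields exactly the stated bound (recall $\nu_K=\rho_K=\cL(x_K)$). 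So the whole argument reduces to that one-step inequality with the stated constants.

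\smallskip\noindent\textbf{Step 1: one-step KL descent.} First I would interpolate one B-ELF update by the Langevin diffusion with the frozen drift $g_k$ and run the Vempala--Wibisono computation as streamlined in \Cref{sec:proof-scheme}, obtaining along the interpolation an inequality of the form $\tfrac{\rmd}{\rmd t}\KL{\rho_t}\le -\tfrac34\FS{\rho_t} + 2\,\EE\brs{\normsq{\nabla F(x_k)-g_k}} + 2L^2\,\EE\brs{\normsq{x_{k+1}-x_k}}$. The feature that is new relative to D-ELF is the gradient bias: in \Cref{alg:bidir_langevin}, $g_k=\tfrac1n\sum_i g^i_k$ with each $g^i_k$ tracking $\nabla F_i(w_k)$ rather than $\nabla F_i(x_k)$, so the triangle inequality and $L_i$-smoothness give $\EE\brs{\normsq{\nabla F(x_k)-g_k}}\lesssim \bG^{\rm D}_k + \bG^{\rm P}_k$ --- this is precisely where the extra $\bG^{\rm P}_k$ enters, compared with \Cref{prop:uplink}. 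Integrating over $t\in[0,\gamma]$, invoking LSI (\Cref{def:LSI}) to convert part of the negative Fisher-information term into the factor $e^{-\mu\gamma}$, and expanding the moving distance $\EE\brs{\normsq{x_{k+1}-x_k}}=\gamma^2\EE\brs{\normsq{g_k}}+2\gamma d$ while bounding $\EE\brs{\normsq{g_k}}$ in terms of $\FS{\rho_k}$, $\bG^{\rm D}_k$, $\bG^{\rm P}_k$ and $dL$, I would reach $\KL{\rho_{k+1}}\le e^{-\mu\gamma}\KL{\rho_k} + a\,\bG^{\rm D}_k + b\,\bG^{\rm P}_k + (\text{dimension-only terms}) + (\text{leftover }\gamma^2\FS{\rho_k}\text{ term})$, keeping the leftover Fisher-information term in reserve for Step~3.

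\smallskip\noindent\textbf{Step 2: error recursions and choice of $C,D$.} Next I would feed in the two recurrences: \Cref{prop:bidir} (the formal version of \Cref{prop:bidir_incom}), $\bG^{\rm D}_{k+1}\le\lambda_1\bG^{\rm D}_k+\lambda_2\EE\brs{\normsq{x_{k+1}-x_k}}+\lambda_3\bG^{\rm P}_k$, together with the primal analogue of \Cref{prop:downlink}, $\bG^{\rm P}_{k+1}\le (1-\alpha_{\rm P})(1+w)\bG^{\rm P}_k + \tfrac{(1-\alpha_{\rm P})(1+w^{-1})}{1+w}\bar L\,\EE\brs{\normsq{x_{k+1}-x_k}}$ for a free parameter $w>0$ (the primal analogue of $s_{\rm P}$). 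Substituting these and the Step~1 bound into $\Phi_{k+1}$, the coefficient of $\bG^{\rm D}_k$ becomes $a+\tfrac{C}{\mu}\lambda_1$; forcing it to be $\le e^{-\mu\gamma}\tfrac{C}{\mu}$ pins down $C=\tfrac{2.125}{e^{-\mu\gamma}-\lambda_1}$, once $\mu a\le 2.125$ is secured from the step-size ceiling (which entails $\gamma\le\tfrac1{4L}$, hence $2L^2\gamma^2+2\le 2.125$). The coefficient of $\bG^{\rm P}_k$ becomes $b+\tfrac{C}{\mu}\lambda_3+\tfrac{D}{\mu}(1-\alpha_{\rm P})(1+w)$; forcing it to be $\le e^{-\mu\gamma}\tfrac{D}{\mu}$ and absorbing $b$ pins down $D=\tfrac{C\lambda_3}{e^{-\mu\gamma}-(1-\alpha_{\rm P})(1+w)}$. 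The conditions $\gamma\le\alpha_{\rm D}/(4\mu)$ and $\gamma\le\alpha_{\rm P}/(4\mu)$ are there to keep both denominators positive.

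\smallskip\noindent\textbf{Step 3: closing, and the main obstacle.} What would remain are the $\EE\brs{\normsq{x_{k+1}-x_k}}$ contributions --- those from the KL line, $\tfrac{C}{\mu}\lambda_2$ from the dual recursion, and $\tfrac{D}{\mu}\tfrac{(1-\alpha_{\rm P})(1+w^{-1})}{1+w}\bar L$ from the primal one. Splitting $\EE\brs{\normsq{x_{k+1}-x_k}}=\gamma^2\EE\brs{\normsq{g_k}}+2\gamma d$, the $2\gamma d$ part aggregates into $\tau=\brr{2L^2+\tfrac{5C\lambda_2}{\alpha_{\rm P}}}\brr{16\gamma^2 dL + 4d\gamma}$, while the $\gamma^2\EE\brs{\normsq{g_k}}$ part carries a $\FS{\rho_k}$ component that must be dominated by the Fisher-information slack held back in Step~1. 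This last domination is the crux: it asks that $\gamma^2$ times a quantity built from $L^2$, $C\lambda_2/\alpha_{\rm P}$ and $D\bar L(1+w^{-1})/(\mu(1+w))$ be at most that slack, and --- after substituting the formulas for $C$, $D$, $\lambda_1$, $\lambda_2$, $\lambda_3$ and choosing $w$ so that $(1-\alpha_{\rm P})(1+w)$ sits comfortably below $e^{-\mu\gamma}$ --- it reduces exactly to the ceiling $\gamma\le\alpha_{\rm D}\alpha_{\rm P}\big/\brr{495\sqrt{\brr{1-\tfrac{\alpha_{\rm D}}{2}}\brr{1-\tfrac{\alpha_{\rm P}}{2}}\bar L}}$; the explicit constant $495$ is the output of this bookkeeping. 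With it in hand, $\Phi_{k+1}\le e^{-\mu\gamma}\Phi_k + \brr{1-e^{-\mu\gamma}}\tfrac{\tau}{\mu}$, and unrolling finishes the proof. I expect everything outside Step~3 to be routine substitution, triangle inequalities and smoothness; the hard part is keeping the three coupled error sources --- $\bG^{\rm D}$, $\bG^{\rm P}$, and the Fisher-information feedback through the moving distance --- from inflating one another, which is exactly what the step-size bound is designed to control.
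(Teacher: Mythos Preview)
Your plan is essentially the paper's proof: differential descent via \Cref{lem:fp-transformation}, add $C\bG^{\rm D}_{k+1}+D\bG^{\rm P}_{k+1}$, plug in \Cref{prop:bidir} and the primal recursion, use \Cref{lem:exp_x_k+1_x_k} and \Cref{lem:Chew} to route the moving-distance term back through Fisher information, pick $C,D$ to balance coefficients, and finish with LSI\,+\,Gr\"onwall. Two small corrections worth flagging.

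First, in Step~1 the extra $\bG^{\rm P}_k$ does \emph{not} enter through the gradient-bias bound. Since $\bG^{\rm D}_k$ is defined as $\tfrac1n\sum_i\EE\|g^i_k-\nabla F_i(x_k)\|^2$ (relative to $x_k$, not $w_k$), Jensen gives $\EE\|\nabla F(x_k)-g_k\|^2\le\bG^{\rm D}_k$ directly, exactly as in \Cref{lem:dif-kl-fi}; the primal error appears only through the $\lambda_3\bG^{\rm P}_k$ term in \Cref{prop:bidir}. So your $b$ is zero, which is why ``absorbing $b$'' costs nothing and the stated $D$ comes out clean. Second, the Fisher feedback must be kept as $\FS{\rho_t}$, not $\FS{\rho_k}$: the paper bounds $\EE\|x_{k+1}-x_k\|^2$ via $\EE\|\nabla F(y_t)\|^2$ (\Cref{lem:exp_x_k+1_x_k}) precisely so that \Cref{lem:Chew} returns $\FS{\rho_t}$, which can then be absorbed into $-\tfrac34\FS{\rho_t}$ \emph{inside} the differential inequality before LSI and Gr\"onwall are applied. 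If you integrate first and carry a discrete $\FS{\rho_k}$ term, there is no slack left to cancel it. With these two fixes your scheme matches the paper's line by line; the constant $2.125$ is $2+4\gamma^2\cdot\tfrac1{32}$, coming from \Cref{lem:gamma-cond}, which is exactly your Step~3 ceiling.
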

The exact definitions of the undefined constants are written in the proof of the theorem, which is postponed to \Cref{sec:proof_bidir}. 
\subsection{Discussion on the communication complexity}\label{sec:discussion}

Doing the computations as mentioned at the end of \Cref{sec:delf-analysis}, we can deduce the following.
\begin{corollary} \label{corr:delf-conv}
  Under the assumptions of \Cref{thm:ef21-lmc} and
  $\gamma=\mathcal{O}\left(\frac{\mu p\varepsilon}{\beta_{\rm M} d}\right)$,  $K=\Omega\left(\frac{(1+\beta_{\rm M}) d}{\mu^2 p\varepsilon}\log \left(\frac{\Psi}{\varepsilon}\right)\right)$, the primal and dual ELF algorithms satisfy 
  $\KL{\rho_{K}} \leq \varepsilon$.
  \end{corollary}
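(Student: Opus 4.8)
The plan is to take the bound of \Cref{thm:ef21-lmc}, namely
\[
  \KL{\rho_K} \leq e^{-\mu K \gamma}\Psi + \frac{\tau}{\mu},
\]
and choose $\gamma$ and $K$ so that each of the two terms on the right is at most $\varepsilon/2$. First I would control the bias term $\tau/\mu$. Recall $\tau = \left(2L^2 + C(1-p)\beta_{\rm M}\right)\left(16\gamma^2 d + 4d\gamma\right)$ and $C = \frac{8L^2\gamma^2 + 2}{e^{-\mu\gamma} - (1-p)(4\gamma^2\beta_{\rm M}+1)}$. The key observation is that under the stepsize restriction of \Cref{thm:ef21-lmc} — in particular $\gamma \leq \frac14\sqrt{p/(1+\beta_{\rm M})}$ and $\gamma \leq p/(6\mu)$ — the denominator of $C$ is bounded below by a constant multiple of $p$ (a short computation: $e^{-\mu\gamma} \geq 1 - \mu\gamma \geq 1 - p/6$, while $(1-p)(4\gamma^2\beta_{\rm M}+1) \leq (1-p)(1 + p/4) \leq 1 - p/2$ or so), so $C = \cO(1/p)$ and moreover $C(1-p)\beta_{\rm M} = \cO(\beta_{\rm M}/p)$. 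Hence $\tau = \cO\!\left((L^2 + \beta_{\rm M}/p)\, d\gamma\right)$ for $\gamma \lesssim 1$, and since $\beta_{\rm M} \asymp \bar L \gtrsim L^2$-scale quantities dominate, $\tau/\mu = \cO\!\left(\frac{\beta_{\rm M} d \gamma}{\mu p}\right)$. Setting this $\leq \varepsilon/2$ forces $\gamma = \cO\!\left(\frac{\mu p \varepsilon}{\beta_{\rm M} d}\right)$, which is exactly the claimed choice; one then checks this choice is compatible with (i.e. smaller than) the three upper bounds in the hypothesis of \Cref{thm:ef21-lmc}, which holds for $\varepsilon$ small enough.

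Next I would handle the transient term. With $\gamma$ fixed as above, we need $e^{-\mu K\gamma}\Psi \leq \varepsilon/2$, i.e. $K \geq \frac{1}{\mu\gamma}\log\!\frac{2\Psi}{\varepsilon}$. Substituting $\gamma = \Theta\!\left(\frac{\mu p \varepsilon}{\beta_{\rm M} d}\right)$ gives $\frac{1}{\mu\gamma} = \Theta\!\left(\frac{\beta_{\rm M} d}{\mu^2 p \varepsilon}\right)$, and since $\beta_{\rm M} = \frac{1 + s_{\rm M}^{-1}}{1+s_{\rm M}}\bar L \asymp (1+\beta_{\rm M})$ up to constants (or, keeping the $(1+\beta_{\rm M})$ form verbatim as in the corollary), this yields $K = \Omega\!\left(\frac{(1+\beta_{\rm M})d}{\mu^2 p \varepsilon}\log\frac{\Psi}{\varepsilon}\right)$, matching the statement. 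Adding the two halves gives $\KL{\rho_K} \leq \varepsilon$.

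The only genuinely non-routine step is the lower bound on the denominator of $C$, i.e. verifying $e^{-\mu\gamma} - (1-p)(4\gamma^2\beta_{\rm M} + 1) \gtrsim p$ under the stated stepsize constraints; everything else is bookkeeping with $\cO(\cdot)$ and $\Omega(\cdot)$. I would also double-check that in the $\tau/\mu$ estimate the $\gamma^2 d$ term is dominated by the $\gamma d$ term (true since $\gamma \leq 1$), and that the constant factors absorbed into $\cO(\cdot)$ do not secretly depend on $\varepsilon$ — they depend only on $p$, $L$, $\bar L$, $\mu$, which are fixed. I expect no obstacle beyond this; the corollary is essentially an exercise in plugging the prescribed $\gamma$ and $K$ into the theorem and simplifying.
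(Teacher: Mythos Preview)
Your proposal is correct and follows exactly the approach the paper itself indicates (the paper does not give a detailed proof of this corollary, only the remark at the end of \Cref{sec:delf-analysis} that one first chooses $\gamma$ so that $\tau/\mu<\varepsilon$ and then takes $K=\tilde{\cO}(1/(\mu\gamma))$). Your verification that the denominator of $C$ is $\gtrsim p$ under the stepsize constraints is the only substantive computation, and it goes through as you describe; the rest is bookkeeping.
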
  
Similarly, for the bidirectional ELF we have the below.
\begin{corollary}\label{corr:belf-conv}
  If $\alpha_{\rm P} = \alpha_{\rm D} < 1/2$, under the conditions of \Cref{thm:bidir}, the iteration complexity for the B-ELF is 
  $\tilde{\cO}(\nicefrac{d \bar{L}}{\alpha^4 \mu^2\varepsilon})$.
\end{corollary}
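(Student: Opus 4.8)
The plan is to derive the iteration complexity of \Cref{thm:bidir} by the same bookkeeping we used for \Cref{corr:delf-conv}, treating the bound in \Cref{thm:bidir} as a two-term error: the exponentially decaying transient $e^{-\mu\gamma K}(\KL{\rho_0} + \tfrac1\mu(C\bG^{\rm D}_0 + D\bG^{\rm P}_0))$ and the discretization bias $\tau/\mu$. First I would substitute $\alpha_{\rm P}=\alpha_{\rm D}=\alpha<1/2$ everywhere and bound the auxiliary constants uniformly in $\alpha$: under this assumption $(1-\alpha_{\rm D}/2)(1-\alpha_{\rm P}/2)\le 1$, so the step-size constraint collapses to $\gamma \le \min\{\tfrac{\alpha}{4\mu},\ \tfrac{\alpha^2}{495\sqrt{\bar L}}\}$ up to the constant. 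I also need to check that $\lambda_1$ (which by \Cref{prop:bidir_incom} is, up to the usual Young-inequality split, of the form $(1-\alpha)(1+\text{const})$) stays bounded away from $e^{-\mu\gamma}$, so that $C = 2.125/(e^{-\mu\gamma}-\lambda_1) = \Theta(1/\alpha)$; similarly $D = C\lambda_3/(e^{-\mu\gamma}-(1-\alpha_{\rm P})(1+w)) = \Theta(\lambda_3/\alpha^2)$, and since $\lambda_2,\lambda_3$ from the bidirectional recursion scale like $\bar L$ and $\bar L/\alpha$ respectively (one factor of $1/\alpha$ per compression direction), we get $C\lambda_2/\alpha_{\rm P} = \Theta(\bar L/\alpha^2)$ inside $\tau$, while $D$ carries an extra $1/\alpha^2$. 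These are exactly the $\alpha^{-4}$ and $\bar L$ dependencies that appear in the claimed rate.

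Next I would force the bias term below $\varepsilon/2$. We have $\tau/\mu = \tfrac1\mu(2L^2 + 5C\lambda_2/\alpha_{\rm P})(16\gamma^2 dL + 4d\gamma)$; since the dominant term in the large-$d$/small-$\gamma$ regime is linear in $\gamma$, $\tau/\mu = \Theta\!\big(\tfrac{\bar L d\gamma}{\alpha^2\mu}\big)$ (absorbing $L^2$ into $\bar L$ as is standard, and noting $16\gamma^2 dL \le 4d\gamma$ once $\gamma \le 1/(4L)$, which is implied by the step-size constraint for $\alpha$ small). Setting this $\le \varepsilon/2$ gives $\gamma = \mathcal{O}\!\big(\tfrac{\alpha^2\mu\varepsilon}{\bar L d}\big)$, which is compatible with the step-size ceiling $\min\{\alpha/(4\mu),\alpha^2/(495\sqrt{\bar L})\}$ provided $\varepsilon$ is small. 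Then I would force the transient term below $\varepsilon/2$: $e^{-\mu\gamma K}(\KL{\rho_0} + \tfrac1\mu(C\bG^{\rm D}_0 + D\bG^{\rm P}_0)) \le \varepsilon/2$ holds as soon as $K \ge \tfrac{1}{\mu\gamma}\log\!\big(\tfrac{2\Psi'}{\varepsilon}\big)$ where $\Psi' := \KL{\rho_0} + \tfrac1\mu(C\bG^{\rm D}_0 + D\bG^{\rm P}_0)$. Plugging in the chosen $\gamma$ gives $K = \Omega\!\big(\tfrac{\bar L d}{\alpha^2\mu^2\varepsilon}\log(\Psi'/\varepsilon)\big)$; the extra $1/\alpha^2$ hiding in $C$ and $D$ inside $\Psi'$ only affects the logarithm, but the $\alpha^{-2}$ already present in $\gamma$ is what produces the stated $\tilde{\cO}(d\bar L/(\alpha^4\mu^2\varepsilon))$ — wait, that is only $\alpha^{-2}$; the fourth power must come from the step-size ceiling.

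So the subtle point — and the main obstacle — is tracking where the \emph{fourth} power of $\alpha$ enters. It is not the bias balancing (that gives $\alpha^{-2}$) but the interaction between the step-size \emph{ceiling} and the iteration count. Concretely: we need $K\gamma \gtrsim \tfrac1\mu\log(1/\varepsilon)$, i.e. $K \gtrsim \tfrac{1}{\mu\gamma}\log(1/\varepsilon)$, and $\gamma$ is capped both by the bias requirement $\alpha^2\mu\varepsilon/(\bar L d)$ \emph{and} by the structural constraint $\alpha^2/\sqrt{\bar L}$. When $\varepsilon$ is small the bias cap dominates and one naively gets $\alpha^{-2}$; the $\alpha^{-4}$ arises because in the bidirectional analysis each of the two Lyapunov terms $\bG^{\rm D},\bG^{\rm P}$ contributes its own $1/\alpha$ contraction-gap factor, so the effective constant multiplying $d$ in the bias is $\Theta(\bar L/\alpha^2)$ \emph{and} the effective convergence rate $p$ (playing the role of the spectral gap of the coupled recursion $(\bG^{\rm D},\bG^{\rm P})$) degrades to $\Theta(\alpha^2)$ rather than $\Theta(\alpha)$ — two stacked compressions halve the gap multiplicatively. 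Thus $K = \tilde{\mathcal{O}}\big(\tfrac{1}{\mu\gamma}\big)$ with $\gamma = \mathcal{O}\big(\tfrac{\alpha^2\mu\varepsilon}{\bar L d}\big) \cdot \tfrac{1}{\text{(gap)}}$ and the gap $\Theta(\alpha^2)$ together yield $\alpha^{-4}$. I would make this rigorous by carefully expanding $\lambda_1,\lambda_3$ and the denominators of $C,D$ in powers of $\alpha$ using the formulas from \Cref{sec:proof_bidir}, confirming each denominator is $\Theta(\alpha)$ or $\Theta(\alpha^2)$ and each numerator $\mathcal O(\bar L)$, and then assembling $K\gamma \ge \mu^{-1}\log(\Psi'/\varepsilon)$ with $\gamma$ at its bias-limited value. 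The rest is the routine substitution already spelled out for \Cref{corr:delf-conv}.
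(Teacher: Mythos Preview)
Your bookkeeping goes wrong at the estimate of $\lambda_2$. You assert that ``$\lambda_2,\lambda_3$ \ldots\ scale like $\bar L$ and $\bar L/\alpha$'', which gives you $C\lambda_2/\alpha_{\rm P}=\Theta(\bar L/\alpha^2)$ and hence $\tau/\mu=\Theta(\bar L d\gamma/(\alpha^2\mu))$. That is too optimistic by a factor $\alpha^{-2}$. Going back to the explicit formula \eqref{eq:lambda-def} with the choices $u=1$, $s=q$, and $q,w$ determined by \eqref{eq:qw}, one has $q,w=\Theta(\alpha)$, so $(1+q^{-1})=\Theta(1/\alpha)$ and $(1+w^{-1})=\Theta(1/\alpha)$; the dominant term in $\lambda_2$ therefore carries the product $(1+q^{-1})(1+w^{-1})\bar L=\Theta(\bar L/\alpha^2)$. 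In the paper's language this is the bound $\lambda_2\le \tfrac{5}{qw}\tfrac{(1-\alpha_{\rm D}/2)(1-\alpha_{\rm P}/2)}{(1-\alpha_{\rm P})(1-\alpha_{\rm D})}\bar L$ from the proof of \Cref{lem:gamma-cond}. Combining $\lambda_2=\Theta(\bar L/\alpha^2)$ with $C=\Theta(1/\alpha)$ and the division by $\alpha_{\rm P}=\alpha$ already yields $5C\lambda_2/\alpha_{\rm P}=\Theta(\bar L/\alpha^4)$, and thus
\[
\tau \;=\; \Theta\!\Big(\frac{\bar L\, d\,\gamma}{\alpha^4}\Big),
\]
exactly as in the paper's computation $\tau=\cO(\bar L d\gamma/(qw\,\alpha_{\rm D}\alpha_{\rm P}))$ with $qw=\Theta(\alpha^2)$.

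Because you underestimate $\tau$, you then go hunting for the missing $\alpha^{-2}$ in the wrong place. Your proposed mechanism --- that ``the effective convergence rate $p$ \ldots\ degrades to $\Theta(\alpha^2)$'' --- is not what happens in the proof of \Cref{thm:bidir}: the contraction factor on $\KL{\rho_t}$ there is simply $e^{-\mu\gamma}$, coming straight from LSI, with no $\alpha$-dependent spectral gap. The entire $\alpha^{-4}$ lives in the bias $\tau$. Once you fix $\lambda_2$, the rest of your plan is fine: choose $\gamma=\cO(\alpha^4\mu\varepsilon/(\bar L d))$ to force $\tau/\mu\le\varepsilon/2$ (this automatically respects the ceiling $\gamma\lesssim\alpha^2/\sqrt{\bar L}$ for small $\varepsilon$), and then $K\ge (\mu\gamma)^{-1}\log(2\Psi'/\varepsilon)$ gives $K=\tilde\cO(\bar L d/(\alpha^4\mu^2\varepsilon))$.
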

The proof of \Cref{corr:belf-conv} can be  found  in \Cref{sec:proof-corr-belf}.
In the case when $\alpha$ is of order $O(1)$, we recover the rate of the LMC algorithm for all three algorithms.
Such setting occurs for the scaled unbiased compressors, such as the natural compressor $\cQ^{\rm nat}$ \cite{horvath2019natural}.
We know that  $\frac{8}{9}\cQ^{\rm nat}$ is contractive with coefficient $\frac{8}{9}$. 
For other type of compressors our analysis may fall behind the usual LMC. 
The communication complexity of the LMC is of order  $\tilde{\cO}(d^2/\varepsilon)$, as at each iteration we communicate $O(d)$ floats.
According to our theoretical findings, the iteration complexity of B-ELF is $\tilde{\cO}(d^5/\varepsilon)$. 
Since Top-1 communicates only one coordinate, the communication complexity is also  $\tilde{\cO}(d^5/\varepsilon)$. 
Nevertheless, LMC is faster in theory. Similar arguments can be derived also for the B-ELF.

However, as it is observed in practice, error feedback based optimization methods outperform gradient descent, despite the large theoretical complexity \cite{richtarik2021ef21}. 
We expect that suitable experiments will confirm that ELF is better than the usual LMC.

\section{General scheme of the proofs}\label{sec:proof-scheme}

For all three algorithms the update of the LMC iteration is a stochastic estimator of the 
gradient $\nabla F(x_k)$.  
Generally, it depends on $x_{k}$ and  $\xi_k$, where $\xi_k$ is a sequence of i.i.d. random variables defined  on some probability space $(\Xi,\cF,\cP)$. 
The sequence $\xi_k$ comprises the randomness that arises at each step of the particular algorithm and it is independent of $x_k$. 
In order to prove convergence in KL divergence, we use the interpolation method proposed in 
\cite{vempala2019rapid}.
The method is based on the Fokker-Planck equation of the Langevin diffusion. 
We state a lemma for general LMC algorithms with stochastic drift terms. 
In particular, all our algorithms can be generally written as
\begin{equation}\label{eq:stoch-lmc}
  x_{k+1} = x_k - \gamma f_{\xi_k}(x_k) + \sqrt{2\gamma} Z_k,
\end{equation}
where $\xi_k$ are i.i.d. random variables defined on some probability space $(\Xi,\cF,\cP)$. 
On the other hand, each step can be seen as a realization of a Langevin diffusion with a constant drift term
$ f_{\xi_k}(x_k)$:
\begin{equation}\label{eq:ld-stoch}
  \rmd y_t =  - f_{\xi_k}(x_k)\rmd t + \sqrt{2}\rmd B_t, 
\end{equation}
 with  $y_0 = x_k$ { and } $t \in [0,\gamma]$. 
Indeed,
\begin{equation*}
  \begin{aligned}
    y_\gamma 
    &= y_0 - \int_0^\gamma f_{\xi_k}(y_0)\rmd t + \sqrt{2}(B_\gamma - B_0)\\
    &= x_k -  \gamma f_{\xi_k}(x_k) + \sqrt{2\gamma}Z_1 = x_{k+1}.
  \end{aligned}
\end{equation*}
The interpolation method is based on analyzing the Fokker-Planck equation of this diffusion.
In particular, we will upper bound the time derivative of $\KL{\rho_t}$:
\begin{equation}
    \begin{aligned}
      \dif{\KL{\rho_t}}
      &=\int_{\RR^d} \frac{\partial \rho_{t}(z)}{\partial t} \log\brr{\frac{\rho_t}{\pi}}(z)\rmd z.
    \end{aligned}
  \end{equation}
Here, the first term of the product under the integral can be computed using the abovementioned Fokker-Planck equation. 
The following lemma is the cornerstone of our analysis. 
\begin{lemma}\label{lem:fp-transformation}
If $y_t$ is the solution of the diffusion \eqref{eq:ld-stoch} and $\rho_t = \cL(y_t)$, then for 
every $t \in [0,\gamma]$,
  \begin{equation}\label{eq:ef21-lmc-tytytyti}
    \dif{\KL{\rho_t}} \leq -\frac{3}{4}\FS{\rho_t}+\Exp{\normsq{f_{\xi_k}(y_0)-\nabla F(y_t)}}.
  \end{equation}
\end{lemma}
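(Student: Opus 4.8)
The plan is to follow the interpolation argument of \citet{vempala2019rapid}, adapting it to the case where the drift term $f_{\xi_k}(y_0)$ is constant along the sub-trajectory $t\in[0,\gamma]$. First I would fix the iteration index $k$ and condition on $\xi_k$ and on $y_0=x_k$, so that the SDE \eqref{eq:ld-stoch} becomes a genuine Langevin diffusion with constant drift $b:=f_{\xi_k}(y_0)$; its law $\rho_t$ then satisfies the Fokker--Planck equation $\partial_t\rho_t = \nabla\cdot(b\,\rho_t) + \Delta\rho_t = \nabla\cdot\brr{\rho_t\,(b + \nabla\log\rho_t)}$. Plugging this into $\dif{\KL{\rho_t}} = \int \partial_t\rho_t\,\log(\rho_t/\pi)$ and integrating by parts (the boundary terms vanishing under the usual decay/integrability assumptions), I get
\begin{equation*}
  \dif{\KL{\rho_t}} = -\int_{\RR^d} \inner{b + \nabla\log\rho_t}{\nabla\log\tfrac{\rho_t}{\pi}}\,\rho_t(z)\,\rmd z.
\end{equation*}
Writing $b + \nabla\log\rho_t = \nabla\log\tfrac{\rho_t}{\pi} + \brr{b + \nabla\log\pi} = \nabla\log\tfrac{\rho_t}{\pi} + \brr{b - \nabla F}$ turns the right-hand side into $-\FS{\rho_t} - \int \inner{b-\nabla F(z)}{\nabla\log\tfrac{\rho_t}{\pi}}\rho_t\,\rmd z$.

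Next I would control the cross term. Applying Cauchy--Schwarz and then Young's inequality $\inner{u}{v}\le \tfrac14\norm{v}^2 + \norm{u}^2$ pointwise gives
\begin{equation*}
  -\int \inner{b-\nabla F(z)}{\nabla\log\tfrac{\rho_t}{\pi}(z)}\rho_t\,\rmd z
  \le \tfrac14\FS{\rho_t} + \int \norm{b - \nabla F(z)}^2 \rho_t(z)\,\rmd z .
\end{equation*}
Since $z\sim\rho_t$ is exactly the law of $y_t$, and $b = f_{\xi_k}(y_0)$, the last integral equals $\Exp{\norm{f_{\xi_k}(y_0) - \nabla F(y_t)}^2}$ (conditionally on $\xi_k,y_0$). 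Combining, $\dif{\KL{\rho_t}} \le -\tfrac34\FS{\rho_t} + \Exp{\norm{f_{\xi_k}(y_0)-\nabla F(y_t)}^2}$, and taking the outer expectation over $\xi_k$ and $y_0$ (using convexity of KL and of Fisher information in the appropriate sense, or simply noting the inequality holds after integrating the densities) yields the claim \eqref{eq:ef21-lmc-tytytyti}.

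The main obstacle is making the integration-by-parts step and the interchange of $\dif{}$ with the integral fully rigorous: one needs enough regularity of $\rho_t$ (smoothness, positivity, tail decay) to justify that boundary terms vanish and that $\partial_t\KL{\rho_t}$ can be computed by differentiating under the integral. This is standard for Langevin diffusions with Lipschitz drift — and here the drift is constant in space, so $\rho_t$ is a smooth Gaussian-type convolution of $\rho_0$ — so I would either invoke the regularity results already used in \citet{vempala2019rapid} or add a short remark that these conditions hold along the interpolation. A secondary, purely bookkeeping point is the choice of the Young constant: splitting the cross term as $\tfrac14\FS{\rho_t}+\norm{\cdot}^2$ rather than, say, $\tfrac12\FS{\rho_t}$ is what produces the coefficient $\tfrac34$ in front of $\FS{\rho_t}$, leaving room for the $-\FS{\rho_t}$ already present; I would make sure the constants are exactly those needed downstream in the proof of \Cref{thm:ef21-lmc}.
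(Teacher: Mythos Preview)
Your argument has a genuine gap at the ``uncondition'' step. You condition on $(y_0,\xi_k)$, write a Fokker--Planck equation with constant drift $b$, and then call the resulting law $\rho_t$. But $\rho_t=\cL(y_t)$ in the lemma is the \emph{unconditional} law, and the equation $\partial_t\rho_t=\nabla\cdot(b\,\rho_t)+\Delta\rho_t$ is simply false for it when $b=f_{\xi_k}(y_0)$ is random. What you have actually bounded is $\dif{}\KL{\rho_{t\mid 0}}$ for the conditional density. Taking an outer expectation at the end does not repair this: $\KL{\cdot}$ and $\FS{\cdot}$ are nonlinear in the density, so an inequality for the conditional KL derivative does not average into the same inequality for the unconditional one. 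Convexity of KL only gives $\KL{\rho_t}\le\EE\brs{\KL{\rho_{t\mid 0}}}$, which says nothing useful about $\dif{}\KL{\rho_t}$, and likewise convexity of Fisher information points the wrong way ($\FS{\rho_t}\le\EE\brs{\FS{\rho_{t\mid 0}}}$ would \emph{weaken} the negative term you need).

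The paper fixes exactly this point. It first integrates the conditional Fokker--Planck equation against the law of $(y_0,\xi)$ to obtain the evolution of the unconditional $\rho_t$; the effective drift becomes the conditional expectation $\EE_{\rho_{0\mid t}}\brs{f_\xi(y_0)\mid y_t=z}$. The KL computation is then carried out directly for the unconditional $\rho_t$, so the cross term is
\[
-\int_{\RR^d}\inner{\EE_{\rho_{0\mid t}}\brs{f_\xi(y_0)\mid y_t=z}-\nabla F(z)}{\nabla\log\tfrac{\rho_t}{\pi}(z)}\rho_t(z)\,\rmd z,
\]
and after Young's inequality the residual is $\Exp{\normsq{\EE\brs{f_{\xi_k}(y_0)-\nabla F(y_t)\mid y_t}}}$. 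Only at this last step is Jensen applied, to pull the norm inside the inner conditional expectation. Your decomposition $b+\nabla\log\rho_t=\nabla\log(\rho_t/\pi)+(b-\nabla F)$ and the choice of Young constant are correct and match the paper; the missing idea is to work with the unconditional density and its conditional-expectation drift from the start.
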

The bound \eqref{eq:ef21-lmc-tytytyti} was initially derived by \citet{vempala2019rapid} for the standard Langevin Monte-Carlo. 
Its current stochastic form was later proved in \cite{marina-langevin} for MARINA Langevin algorithm.
The proof is postponed to \Cref{sec:lem-fp-trans}.

\Cref{lem:fp-transformation} is valid for all our algorithms. 
We then insert the value of the gradient estimator for each method and bound the last term by 
$\bG^{\rm D}_k$. Using the recurrent properties of the Lyapunov terms and replacing 
Fisher information term by Kullback-Leibler divergence with LSI
inequality we conclude the proof.

\section{Conclusion}\label{sec:conclusion}

In this paper we proposed three error feedback based federated Langevin algorithms with dual, primal and bidirectional compression.
The first two are analyzed with one theorem and have similar theoretical performance. 
The third algorithm uses bidirectional compression which is slower due to the fact that EF21 and EF21-P do not couple. 
To the best of our knowledge, this is the first study of the federated sampling algorithms with bidirectional compression.
Our theoretical findings show that the communication complexity of this algorithm is worse than the one for the standard LMC. 
Nonetheless, in practice error feedback based methods outperform other compression methods \cite{fatkhullin2021ef21}. 
 We believe that this phenomenon shall also transfer to the sampling case.

\subsection{Future work}

An immediate continuation of our paper would be to conduct an experimental analysis of the ELF algorithms with other federated sampling techniques on real data. 
One would expect that one would observe the same behavior as in the optimization case. 
That is, in practice the ELF algorithms outperform the other methods, despite the theoretical analysis.

Another possible direction is the theoretical analysis of the Langevin algorithm combined with EF21-P+DIANA. 
The latter is a bidirectional federated optimization algorithm that uses DIANA gradient estimator for the uplink compression instead of EF21. This method matches the performance of the GD due to the coupling of two methods \cite{sasha_kaja_EF-21P}.

Finally, there are yet many important algorithms of optimization that are relevant to our setting. 
Adaptation of these methods to the sampling setting can lead to fruitful results.


\section*{Acknowledgements}

The work of P.R. was supported by the KAUST Baseline Research Funding (BRF) scheme and also partially supported by the SDAIA-KAUST Center of Excellence in Data Science and Artificial Intelligence.

\clearpage

\bibliographystyle{icml2023}
\bibliography{bibliography.bib}

\clearpage
\addtocontents{toc}{\protect\setcounter{tocdepth}{2}}

\newpage

\appendix
\onecolumn

\tableofcontents
\newpage


\section{Proofs of the propositions}
  \subsection{Proof of \Cref{prop:uplink}}\label{sec:proof_uplink}

  From the definition
    \begin{equation*}
      \begin{aligned}
        \bG^{\rm D}_{k+1} 
          &= \frac{1}{n} \sum_{i=1}^{n} \EE\brs{\norm{g^i_{k+1} - \nabla F_i(x_{k+1})}^2}\\
          &= \frac{1}{n} \sum_{i=1}^{n} 
          \EE\brs{\EE\brs{\norm{ g^i_k + \cQ^{\rm D}(\nabla F_i (x_{k+1}) - g^i_k) - \nabla F_i(x_{k+1})}^2\mid x_1,\ldots,x_{k+1}}}\\
          &\leq \frac{1-\alpha_{\rm D}}{n} \sum_{i=1}^{n} 
          \EE\brs{\norm{ g^i_k - \nabla F_i(x_{k+1})}^2}.
           \end{aligned}
    \end{equation*}
    Applying Cauchy-Schwartz and the Lipschitz continuity of the function $\nabla F_i(\cdot)$, we obtain
    \begin{equation*}
      \begin{aligned}
        \bG^{\rm D}_{k+1} 
          &\leq \frac{(1-\alpha_{\rm D})(1+s_{\rm D})}{n} \sum_{i=1}^{n}  \EE\brs{\norm{ g^i_k - \nabla F_i(x_{k})}^2} \\
          & +\frac{(1-\alpha_{\rm D})(1+s_{\rm D}^{-1})}{n} \sum_{i=1}^{n}  \EE\brs{\norm{\nabla F_i(x_{k}) - \nabla F_i(x_{k+1})}^2}\\
          &\leq {(1-\alpha_{\rm D})(1+s_{\rm D})} \bG^{\rm D}_{k}
               + \frac{(1-\alpha_{\rm D})(1+s_{\rm D}^{-1})}{n} \sum_{i=1}^{n}  L_i^2\EE\brs{\norm{x_{k} - x_{k+1}}^2}\\
          &\leq {(1-\alpha_{\rm D})(1+s_{\rm D})} \bG^{\rm D}_{k}
               + {(1-\alpha_{\rm D})(1+s_{\rm D}^{-1})} \bar{L} \EE\brs{\norm{x_{k} - x_{k+1}}^2}\\
          &\leq {(1- p_{\rm D})} \bG^{\rm D}_{k}
            + (1- p_{\rm D})\beta_{\rm D}  \EE\brs{\norm{x_{k} - x_{k+1}}^2}.\\
      \end{aligned}
    \end{equation*}
    This concludes the proof.


  \subsection{Proof of \Cref{prop:downlink}}
  From the definition
  \begin{equation}
    \begin{aligned}
      \bG^{\rm P}_{k+1} &= L^2\Exp{\norm{w_{k + 1} - x_{k + 1}}^2} \\
      &= L^2\Exp{\norm{w_{k} - x_{k + 1} - \cQ^{\rm P}(w_{k} - x_{k + 1})}^2} \\
      &= (1 - \alpha_{\rm P})L^2\Exp{\norm{w_{k} - x_{k + 1}}^2} \\
      &= (1 - \alpha_{\rm P})L^2\Exp{\norm{w_{k} - x_{k} + x_{k} - x_{k + 1}}^2}\\
      &\leq (1 - \alpha_{\rm P})(1+s)L^2\Exp{\norm{w_{k} - x_{k}}^2} 
      + (1 - \alpha_{\rm P})(1 + s^{-1})L^2\Exp{\norm{x_{k} - x_{k + 1}}^2}.\\
    \end{aligned}
  \end{equation}
  Choosing $s$ small enough, we can make the coefficient $(1 - \alpha_{\rm P})(1+s)$ smaller than one. 
  Thus, defining $p = 1 - (1 - \alpha_{\rm P})(1+s)$, we conclude the proof.

  
  \subsection{Full statement of \Cref{prop:bidir_incom} and its proof}
  We state now the complete version of \Cref{prop:bidir_incom}.
\begin{proposition}\label{prop:bidir}
  The Lyapunov term $\bG^{\rm D}_k$ of the bidirectional Langevin algorithm satisfies the following recurrent inequality:
  \begin{equation*}
      \bG^{\rm D}_{k+1} \leq \lambda_1 \bG^{\rm D}_k + \lambda_2 \EE\brs{\norm{ x_{k}-  x_{k+1}}^2} + \lambda_3 \bG^{\rm P}_k,
  \end{equation*}
  where $\bG^{\rm P}_k := \bar{L}\EE\brs{\norm{ w_{k}-  x_{k}}^2}$ is the Lyapunov term for P-ELF and 
  \begin{equation}\label{eq:lambda-def}
      \begin{aligned}
        \lambda_1 &= (1-\alpha_{\rm D})(1+s)(1+q); \\
        \lambda_2 &= (1-\alpha_{\rm D})(1+s)(1+q^{-1})(1+u) \bar{L}
        \\ & + \brr{(1-\alpha_{\rm D})(1+s)(1+q^{-1})(1+u^{-1}) + (1+s^{-1})} (1-\alpha_{\rm P}) (1 + w^{-1}) \bar{L};\\
        \lambda_3 &= \brr{(1-\alpha_{\rm D})(1+s)(1+q^{-1})(1+u^{-1}) + (1+s^{-1})} (1-\alpha_{\rm P}) (1 + w). \\
      \end{aligned}
    \end{equation}
    Here, $s,q,u,w$ are any positive numbers.
\end{proposition}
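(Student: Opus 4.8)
The plan is to track the estimation error $\bG^{\rm D}_{k+1} = \frac1n\sum_i \EE\brs{\norm{g^i_{k+1} - \nabla F_i(x_{k+1})}^2}$ through one step of B-ELF, peeling off one source of error at a time with Young's inequality. The essential new feature compared with \Cref{prop:uplink} is that in \Cref{alg:bidir_langevin} the devices evaluate gradients at $w_{k+1}$ rather than at $x_{k+1}$, so both the iterate drift $\norm{x_k-x_{k+1}}^2$ and the downlink displacement $\bG^{\rm P}_k = \bar L\,\EE\brs{\norm{w_k-x_k}^2}$ must enter the recursion.

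First I would split $\norm{g^i_{k+1}-\nabla F_i(x_{k+1})}^2 \le (1+s)\norm{g^i_{k+1}-\nabla F_i(w_{k+1})}^2 + (1+s^{-1})\norm{\nabla F_i(w_{k+1})-\nabla F_i(x_{k+1})}^2$, bounding the second summand by $(1+s^{-1})L_i^2\norm{w_{k+1}-x_{k+1}}^2$ via $L_i$-smoothness. For the first summand I would condition on the $\sigma$-algebra generated by everything up to and including $w_{k+1}$ (which already absorbs $Z_k$ and the randomness of $\cQ^{\rm P}$), so that $\nabla F_i(w_{k+1})-g^i_k$ is deterministic, and apply $\cQ^{\rm D}\in\BB(\alpha_{\rm D})$ to get $\EE\brs{\norm{g^i_{k+1}-\nabla F_i(w_{k+1})}^2} \le (1-\alpha_{\rm D})\,\EE\brs{\norm{g^i_k-\nabla F_i(w_{k+1})}^2}$. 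A second Young split with parameter $q$ then separates $g^i_k-\nabla F_i(w_{k+1})$ into the genuine error $g^i_k-\nabla F_i(x_k)$ (which averages to $\bG^{\rm D}_k$ and picks up the coefficient $\lambda_1=(1-\alpha_{\rm D})(1+s)(1+q)$) and $\nabla F_i(x_k)-\nabla F_i(w_{k+1})$, bounded again by $L_i^2\norm{x_k-w_{k+1}}^2$ via smoothness; a third Young split with parameter $u$ breaks $\norm{x_k-w_{k+1}}^2$ into $\norm{x_k-x_{k+1}}^2$ and $\norm{x_{k+1}-w_{k+1}}^2$.

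The one auxiliary estimate doing the real work — which I would isolate first — is the downlink error recursion: since $w_{k+1}-x_{k+1} = \cQ^{\rm P}(x_{k+1}-w_k) - (x_{k+1}-w_k)$ in \Cref{alg:bidir_langevin}, conditioning on the iteration up to $x_{k+1}$ and using $\cQ^{\rm P}\in\BB(\alpha_{\rm P})$ gives $\EE\brs{\norm{w_{k+1}-x_{k+1}}^2} \le (1-\alpha_{\rm P})\EE\brs{\norm{x_{k+1}-w_k}^2}$, and one more Young split with parameter $w$ yields $\bar L\,\EE\brs{\norm{w_{k+1}-x_{k+1}}^2} \le (1-\alpha_{\rm P})(1+w^{-1})\bar L\,\EE\brs{\norm{x_{k+1}-x_k}^2} + (1-\alpha_{\rm P})(1+w)\bG^{\rm P}_k$. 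This bound is substituted in exactly two places — once for the $(1+s^{-1})$ term from the very first split, once for the $(1+u^{-1})$ term from the third split — which is precisely why both $\lambda_2$ and $\lambda_3$ carry the common prefactor $\brr{(1-\alpha_{\rm D})(1+s)(1+q^{-1})(1+u^{-1}) + (1+s^{-1})}(1-\alpha_{\rm P})$.

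Finally I would collect everything: average over $i$, use $\frac1n\sum_i L_i^2 = \bar L$ (note $\norm{x_k-x_{k+1}}^2$ is index-free), and read off the coefficient of $\bG^{\rm D}_k$, of $\EE\brs{\norm{x_k-x_{k+1}}^2}$, and of $\bG^{\rm P}_k$, which match $\lambda_1,\lambda_2,\lambda_3$ in \eqref{eq:lambda-def}. The only genuine obstacle is bookkeeping discipline: the three Young splits must be applied in the stated order, and each compression bound must be conditioned on the correct $\sigma$-algebra (the $\cQ^{\rm D}$ step must treat $w_{k+1}$ as fixed, the $\cQ^{\rm P}$ step must treat $x_{k+1}$ as fixed), since any mismatch alters the constants. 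Nothing beyond Young's inequality, $L_i$-smoothness, and \Cref{def:compression} is needed.
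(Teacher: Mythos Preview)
Your proposal is correct and follows essentially the same route as the paper's proof: the same sequence of Young splits (parameters $s$, then $q$, then $u$), the same use of $L_i$-smoothness to pass from gradient differences to $\|x_k-w_{k+1}\|^2$ and $\|w_{k+1}-x_{k+1}\|^2$, and the same substitution of the downlink recursion $\bG^{\rm P}_{k+1}\le(1-\alpha_{\rm P})(1+w)\bG^{\rm P}_k+(1-\alpha_{\rm P})(1+w^{-1})\bar L\,\EE\brs{\norm{x_k-x_{k+1}}^2}$ into the two occurrences of $\bG^{\rm P}_{k+1}$. Your remark about conditioning on the $\sigma$-algebra containing $w_{k+1}$ (rather than just $x_1,\dots,x_{k+1}$) is in fact slightly more careful than the paper's write-up, since $w_{k+1}$ carries the primal-compressor randomness.
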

\begin{proof}
From the definition of $\bG^{\rm D}_k$ and  Young's inequality we have 
 \begin{equation*}
      \begin{aligned}
        \bG^{\rm D}_{k+1} 
          &= \frac{1}{n} \sum_{i=1}^{n} \EE\brs{\norm{g^i_{k+1} - \nabla F_i(x_{k+1})}^2}\\
          &= \frac{1}{n} \sum_{i=1}^{n} 
          \EE\brs{\EE\brs{\norm{ g^i_k + \cQ^{\rm D}(\nabla F_i (w_{k+1}) - g^i_k) - \nabla F_i(x_{k+1})}^2\mid x_1,\ldots,x_{k+1}}}\\
           &\leq \frac{1}{n} \sum_{i=1}^{n} \Big\{
          (1+s)\EE\brs{\EE\brs{\norm{ g^i_k + \cQ^{\rm D}(\nabla F_i (w_{k+1}) - g^i_k) - \nabla F_i(w_{k+1})}^2\mid x_1,\ldots,x_{k+1}}} \\
          &+ (1+s^{-1})\EE\brs{\norm{ \nabla F_i (w_{k+1}) - \nabla F_i(x_{k+1}) }^2 } \Big\}.
      \end{aligned}
    \end{equation*}
    The contractivity of $\cQ^{\rm D}$ implies
     \begin{equation*}
      \begin{aligned}
        \bG^{\rm D}_{k+1} 
          &\leq \frac{1}{n} \sum_{i=1}^{n} 
          (1-\alpha_{\rm D})(1+s) \EE\brs{\norm{ g^i_k - \nabla F_i(w_{k+1})}^2} 
          + (1+s^{-1})\bar{L} \EE\brs{\norm{ w_{k+1} - x_{k+1} }^2 }
          \\& \leq \frac{1}{n} \sum_{i=1}^{n} 
          (1-\alpha_{\rm D})(1+s)(1+q) \EE\brs{\norm{ g^i_k - \nabla F_i(x_{k})}^2}
          +(1-\alpha_{\rm D})(1+s)(1+q^{-1}) \EE\brs{\norm{  \nabla F_i(x_{k})- \nabla F_i(w_{k+1})}^2} 
          \\& + (1+s^{-1})\bar{L} \EE\brs{\norm{ w_{k+1} - x_{k+1} }^2 }
          \\& \leq 
          (1-\alpha_{\rm D})(1+s)(1+q) \bG^{\rm D}_k
          +(1-\alpha_{\rm D})(1+s)(1+q^{-1}) \bar{L} \EE\brs{\norm{ x_{k}-  w_{k+1}}^2} 
         + (1+s^{-1}) \bG^{\rm P}_{k+1}.
           \end{aligned}
    \end{equation*}
    Applying Young's inequality to the second term, we deduce
    \begin{equation*}
      \begin{aligned}
        \bar{L}\EE\brs{\norm{ x_{k}-  w_{k+1}}^2}
        &
        \leq (1+u)\bar{L}\EE\brs{\norm{ x_{k}-  x_{k+1}}^2} + (1+u^{-1})\bar{L} \EE\brs{\norm{ x_{k+1}-  w_{k+1}}^2}
        \\&
        = (1+u)\bar{L}\EE\brs{\norm{ x_{k}-  x_{k+1}}^2} 
        + (1+u^{-1}) \bG^{\rm P}_{k+1}.
      \end{aligned}
    \end{equation*}
    Therefore,
    \begin{equation*}
      \begin{aligned}
        \bG^{\rm D}_{k+1} 
          & \leq 
          (1-\alpha_{\rm D})(1+s)(1+q) \bG^{\rm D}_k
        +(1-\alpha_{\rm D})(1+s)(1+q^{-1})(1+u) \bar{L}\EE\brs{\norm{ x_{k}-  x_{k+1}}^2} 
        \\ & + (1-\alpha_{\rm D})(1+s)(1+q^{-1})  (1+u^{-1}) \bG^{\rm P}_{k+1}
          + (1+s^{-1})  {\bG^{\rm P}_{k+1}}.
      \end{aligned}
    \end{equation*}
    Let us now bound the auxiliary term $\bG^{\rm P}_{k+1}$. We notice that $\bG^{\rm P}_k$ is the Lyapunov term of the P-ELF algorithm. Thus, from \Cref{prop:downlink} we have
    \begin{equation}\label{eq:tk-rec}
      \begin{aligned}  
        \bG^{\rm P}_{k+1} &= \bar{L}\EE\brs{\norm{ w_{k+1}-  x_{k+1}}^2} 
        \\ & \leq
          (1-\alpha_{\rm P}) (1+w)\bG^{\rm P}_k
          + (1-\alpha_{\rm P}) (1 + w^{-1})\bar{L}\EE\brs{\norm{  x_{k} -  x_{k+1}}^2}.
      \end{aligned}  
    \end{equation}
    Recalling the definitions of $\lambda_1,\lambda_2,\lambda_3$ we deduce 
    \begin{equation*}
      \bG^{\rm D}_{k+1} \leq \lambda_1 \bG^{\rm D}_k + \lambda_2 \EE\brs{\norm{ x_{k}-  x_{k+1}}^2} + \lambda_3 \bG^{\rm P}_k.
    \end{equation*}  
    This concludes the proof of the proposition.
\end{proof}

\section{Proofs of the main theorems}

\subsection{Some technical lemmas}

We will use repeatedly, sometimes without even mentioning, a simple inequality  which is a consequence of Young's inequality.
It goes as follows.
\begin{lemma}
  For any two vectors $x,y \in \RR^d$ and any $s > 0$
  \begin{equation*}
    \normsq{x+y} \leq  (1 + s) \normsq{x} + ( 1 + s^{-1}) \normsq{y}.
  \end{equation*}
\end{lemma}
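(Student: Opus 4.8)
The plan is to expand the squared norm of the sum and control the resulting cross term by a weighted arithmetic--geometric mean inequality. First I would use the bilinearity of the inner product to write
\begin{equation*}
  \normsq{x+y} = \normsq{x} + 2\inner{x}{y} + \normsq{y}.
\end{equation*}
So the whole statement reduces to showing that $2\inner{x}{y} \le s\normsq{x} + s^{-1}\normsq{y}$.

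Next I would invoke Cauchy--Schwarz, $\inner{x}{y} \le \norm{x}\,\norm{y}$, so it suffices to bound the scalar quantity $2\,\norm{x}\,\norm{y}$. Here I apply the elementary inequality $2ab \le s a^2 + s^{-1} b^2$, valid for all real $a,b$ and every $s>0$; this is immediate because it rearranges to $\brr{\sqrt{s}\,a - b/\sqrt{s}}^2 \ge 0$. Taking $a = \norm{x}$ and $b = \norm{y}$ gives $2\inner{x}{y} \le 2\,\norm{x}\,\norm{y} \le s\normsq{x} + s^{-1}\normsq{y}$.

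Substituting this bound back into the expansion and grouping the $\normsq{x}$ and $\normsq{y}$ terms yields
\begin{equation*}
  \normsq{x+y} \le \normsq{x} + s\normsq{x} + s^{-1}\normsq{y} + \normsq{y} = (1+s)\normsq{x} + (1+s^{-1})\normsq{y},
\end{equation*}
which is exactly the claimed inequality. There is no genuine obstacle here: the only thing to check is the scalar Young inequality, which follows from expanding a perfect square, so the argument is entirely routine. (One could also phrase the proof in a single line by noting that this is the special case of Young's inequality $ab \le \frac{a^p}{p} + \frac{b^q}{q}$ with $p=q=2$ after the appropriate rescaling, but the direct computation above is cleaner and self-contained.)
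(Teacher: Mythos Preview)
Your proof is correct and follows essentially the same approach as the paper: expand $\normsq{x+y}$ and bound the cross term $2\inner{x}{y}$ by $s\normsq{x}+s^{-1}\normsq{y}$ via Young's inequality. The only difference is cosmetic---you route through Cauchy--Schwarz before applying the scalar inequality, whereas the paper invokes Young's inequality directly on the inner product---but the argument is the same.
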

\begin{proof}
  \begin{equation*}
    \begin{aligned}
      \normsq{x+y} &=  \normsq{x} + 2\inner{x}{y} +  \normsq{y}\\
      &\leq  (1 + s) \normsq{x} + ( 1 + s^{-1}) \normsq{y}.
    \end{aligned}
  \end{equation*}
  The second passage is due to Young's inequality.
\end{proof}

We also use two lemmas from the literature, which we present below without proofs. 
The first one is an instance of Grönwall's inequality in its integral form. 
Its proof can be found in \cite{amann2011ordinary}.
\begin{lemma}[Grönwall's Inequality]\label{lem:gronwall}
  Assume $\phi, B:[0, T] \rightarrow \mathbb{R}$ are bounded non-negative measurable function and $C:[0, T] \rightarrow \mathbb{R}$ is a non-negative integrable function with the property that
  \begin{equation}
    \label{eq:grrrr1}
    \phi(t) \leq B(t)+\int_{0}^{t} C(\tau) \phi(\tau) \rmd \tau \quad \text { for all } t \in[0, T].
  \end{equation}
  Then
  \begin{equation*}
    \label{eq:grrrr2}
    \phi(t) \leq B(t)+\int_{0}^{t} B(s) C(s) \exp \left(\int_{s}^{t} C(\tau) \rmd \tau\right) \rmd s \quad \text { for all } t \in[0, T].
  \end{equation*}
\end{lemma}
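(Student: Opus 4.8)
The plan is to convert the integral inequality into a differential one and then remove the coupling term with an integrating factor, the textbook route for Grönwall-type bounds. First I would set
\[
  \Psi(t) := \int_0^t C(\tau)\phi(\tau)\,\rmd\tau .
\]
Since $\phi$ is bounded and $C$ is integrable, the product $C\phi$ is integrable on $[0,T]$, so $\Psi$ is absolutely continuous with $\Psi(0)=0$ and $\Psi'(t)=C(t)\phi(t)$ for a.e.\ $t$. Hypothesis \eqref{eq:grrrr1} reads $\phi(t)\le B(t)+\Psi(t)$, and hence, using $C\ge 0$,
\[
  \Psi'(t)\le C(t)B(t)+C(t)\Psi(t)\qquad\text{for a.e. }t\in[0,T].
\]

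Next I would introduce the nonnegative, absolutely continuous integrating factor $\mu(t):=\exp\!\brr{-\int_0^t C(\tau)\,\rmd\tau}$, which satisfies $\mu'=-C\mu$. The product rule then gives $\tfrac{\rmd}{\rmd t}\brs{\Psi\mu}=\brr{\Psi'-C\Psi}\mu\le C B\mu$ a.e.; integrating from $0$ to $t$ and using $\Psi(0)=0$ yields $\Psi(t)\mu(t)\le\int_0^t C(s)B(s)\mu(s)\,\rmd s$. Dividing by $\mu(t)>0$ and noting that $\mu(s)/\mu(t)=\exp\!\brr{\int_s^t C(\tau)\,\rmd\tau}$ gives the bound
\[
  \Psi(t)\le\int_0^t B(s)C(s)\exp\!\brr{\int_s^t C(\tau)\,\rmd\tau}\,\rmd s ,
\]
and adding $B(t)$ to both sides of $\phi(t)\le B(t)+\Psi(t)$ finishes the argument.

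The only delicate point is the Lebesgue-measure bookkeeping: justifying that $\Psi$ is a.e.\ differentiable with $\Psi'=C\phi$ (absolute continuity of indefinite integrals of $L^1$ functions), that $\Psi\mu$ obeys the product rule so the fundamental theorem of calculus applies, and that an ``a.e.'' differential inequality integrates back to an everywhere inequality. All of this is standard once integrability of $C\phi$ and $CB$ is noted, so I expect no substantive obstacle. If one prefers to avoid differentiation entirely, an alternative is the iteration argument: substitute the bound into itself $n$ times to obtain $\phi(t)$ bounded by a finite sum of nested integrals plus a remainder controlled by $\tfrac{1}{n!}\brr{\int_0^t C(\tau)\,\rmd\tau}^n\sup_{[0,T]}\phi\to 0$, which assembles into the same exponential kernel as a Neumann series; I would mention this as a remark but carry out the integrating-factor version in detail.
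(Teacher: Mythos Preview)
Your argument is correct and is the classical integrating-factor proof of Gr\"onwall's inequality. The paper, however, does not supply its own proof of this lemma: it simply states the result and refers the reader to \cite{amann2011ordinary}. So there is no in-paper proof to compare against; your write-up would fill that gap and is exactly the kind of derivation one finds in the cited textbook.
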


The second is a technical lemma borrowed from \citet{chewi2021analysis}.
\begin{lemma}
  \label{lem:Chew}
  Suppose that $\nabla F$ is $L$-Lipschitz. Then for any probability measure $\nu$, the following inequality is satisfied:  
  \begin{equation*}
    \label{eq:chew}
    \mathbb{E}_{\nu}\left[\|\nabla F\|^{2}\right] \leq \mathbb{E}_{\nu}\left[\left\|\nabla \log \left( \frac{\nu}{ \pi}\right)\right\|^{2}\right]+2 d L =\FS{\nu}+2dL.
  \end{equation*}
\end{lemma}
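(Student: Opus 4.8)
The statement is really a one-line consequence of an integration-by-parts identity, so the plan is: (i) dispose of the trivial case, (ii) expand the Fisher-information integrand, (iii) integrate the cross term by parts to produce $\Delta F$, and (iv) use the $L$-Lipschitz hypothesis to bound $\Delta F$ pointwise.

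First I would note that if $\nu \not\ll \pi$ then $\FS{\nu} = +\infty$ and there is nothing to prove, so assume $\nu$ has a (sufficiently smooth, fast-decaying) density. Since $\pi \propto e^{-F}$, we have $\nabla \log(\nu/\pi) = \nabla \log \nu + \nabla F$, so that
\[
  \normsq{\nabla \log(\nu/\pi)} = \normsq{\nabla \log \nu} + 2\inner{\nabla \log \nu}{\nabla F} + \normsq{\nabla F}.
\]
Integrating against $\nu$ gives $\FS{\nu} = \mathbb{E}_\nu[\normsq{\nabla \log \nu}] + 2\,\mathbb{E}_\nu[\inner{\nabla \log \nu}{\nabla F}] + \mathbb{E}_\nu[\normsq{\nabla F}]$. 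The next step is to rewrite the middle term using $\nu\,\nabla \log \nu = \nabla \nu$ and integration by parts:
\[
  \mathbb{E}_\nu[\inner{\nabla \log \nu}{\nabla F}] = \int_{\RR^d} \inner{\nabla \nu(x)}{\nabla F(x)}\,\rmd x = -\int_{\RR^d} \nu(x)\,\Delta F(x)\,\rmd x = -\mathbb{E}_\nu[\Delta F].
\]
Finally, $L$-smoothness gives $\nabla^2 F \preceq L\,I_d$, hence $\Delta F = \tr(\nabla^2 F) \le dL$ everywhere, so $-\mathbb{E}_\nu[\Delta F] \ge -dL$. Dropping the nonnegative term $\mathbb{E}_\nu[\normsq{\nabla \log \nu}] \ge 0$ yields $\FS{\nu} \ge \mathbb{E}_\nu[\normsq{\nabla F}] - 2dL$, which is the claim after rearranging.

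The only point requiring care is the integration by parts: the boundary terms must vanish. I would handle this by first establishing the inequality for compactly supported smooth $\nu$ (where everything is literally justified), and then passing to the limit for general $\nu$ with finite Fisher information via a standard mollification/truncation argument; alternatively, since in the paper this lemma is applied only to laws of Langevin iterates, which are smooth Gaussian-convolution densities with Gaussian-type tails, the decay needed to kill the boundary term is automatic. This regularity bookkeeping is the main (and essentially the only) obstacle; the algebraic content is immediate.
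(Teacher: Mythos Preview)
Your argument is correct and is the standard one: expand $\nabla\log(\nu/\pi)=\nabla\log\nu+\nabla F$, integrate the cross term by parts to get $-\mathbb{E}_\nu[\Delta F]$, bound $\Delta F\le dL$ via $\nabla^2 F\preceq L I_d$, and drop the nonnegative $\mathbb{E}_\nu[\|\nabla\log\nu\|^2]$ term. Note, however, that the paper does not give its own proof of this lemma at all---it simply quotes the result from \citet{chewi2021analysis}---so there is nothing to compare against here; your derivation is exactly the argument one finds in that reference (modulo the regularity bookkeeping you flag, which is handled there the same way).
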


\subsection{Proof of Theorem \ref{thm:ef21-lmc}}\label{sec:proof_ef21-lmc}

We follow the scheme described in \Cref{sec:proof-scheme}. 
Let us recall the initial setting first.
The update rule of both D-ELF and P-ELF can be abstractly defined by
\begin{equation*}
  x_{k+1} = x_k - \gamma g_k + \sqrt{2\gamma} Z_k.
\end{equation*}
The vector $g_k$ is a stochastic estimator of the potential function's gradient at the $k$-th iterate: 
$\nabla F(x_k)$. 
On the other hand, for each $k$ the next iteration can be computed using the following 
SDE:
\begin{equation}
  \rmd y_t =  - g_k\rmd t + \sqrt{2}\rmd B_t, 
\end{equation}  
with  $y_0 = x_k$ and  $t \in [0,\gamma]$.
Then, as shown in \Cref{sec:proof-scheme}, $y_{\gamma} = x_{k+1}$.  
Denote by $\rho_t$ the distribution of $y_t$. \Cref{lem:fp-transformation} yields:
  \begin{equation}\label{eq:lem1-proof}
  \begin{aligned}
    \dif{\KL{\rho_t}} 
    &\leq -\frac{3}{4}\FS{\rho_t}+\Exp{\normsq{f_{\xi_k}(y_0)-\nabla F(y_t)}}\\
    &\leq -\frac{3}{4}\FS{\rho_t}+\Exp{\normsq{g_k-\nabla F(y_t)}}.
  \end{aligned}
  \end{equation}

  \paragraph{The proof for D-ELF:} 
  The Lyapunov term for  the D-ELF algorithm is defined as
\begin{equation*}
    \bG^{\rm D}_k := \frac{1}{n} \sum_{i}^{n} \EE\brs{\norm{g^i_k - \nabla F_i(x_k)}^2}.  
\end{equation*}

Next lemma bounds the second term in \eqref{eq:lem1-proof} using $\bG^{\rm D}_k$.

\begin{lemma}\label{lem:dif-kl-fi}
  If $f_{\xi_k}(x_k)$ is the gradient estimator $g_k$ from \Cref{alg:ef21_langevin}, then 
  $\rho_t$ satisfies
  \begin{equation}
    \label{eq:ef21-lmc-finahave}
    \dif{\KL{\rho_t}}\leq -\frac{3}{4}\FS{\rho_t}+2L^2\Exp{\normsq{x_{k+1}-x_k}}+2\bG^{\rm D}_k.
  \end{equation} 
\end{lemma}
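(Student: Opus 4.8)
The plan is to start from the output of \Cref{lem:fp-transformation}, namely
\begin{equation*}
  \dif{\KL{\rho_t}} \leq -\frac{3}{4}\FS{\rho_t}+\Exp{\normsq{g_k-\nabla F(y_t)}},
\end{equation*}
and to control the second term by splitting the deviation of $g_k$ from $\nabla F(y_t)$ into a ``discretization'' part and an ``estimation error'' part. Concretely, I would write $g_k-\nabla F(y_t) = \big(g_k-\nabla F(x_k)\big) + \big(\nabla F(x_k)-\nabla F(y_t)\big)$ and apply the elementary inequality $\normsq{a+b}\leq 2\normsq{a}+2\normsq{b}$. This yields
\begin{equation*}
  \Exp{\normsq{g_k-\nabla F(y_t)}} \leq 2\Exp{\normsq{g_k-\nabla F(x_k)}} + 2\Exp{\normsq{\nabla F(x_k)-\nabla F(y_t)}}.
\end{equation*}

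For the second of these two terms I would invoke $L$-smoothness of $F$, which gives $\normsq{\nabla F(x_k)-\nabla F(y_t)}\leq L^2\normsq{x_k-y_t}$. Since $y_0=x_k$ and the final step at time $\gamma$ satisfies $y_\gamma=x_{k+1}$, one expects $\Exp{\normsq{x_k-y_t}}$ to be bounded (uniformly over $t\in[0,\gamma]$) by something comparable to $\Exp{\normsq{x_{k+1}-x_k}}$; the cleanest route is to note $x_k-y_t = t\,g_k - \sqrt{2}(B_t-B_0)$, and since the Brownian increment at the endpoint $t=\gamma$ is exactly the term driving $x_{k+1}-x_k$, a monotonicity/telescoping argument bounds $\Exp{\normsq{x_k-y_t}}\leq \Exp{\normsq{x_k-x_{k+1}}}$ — the paper's statement of \eqref{eq:ef21-lmc-finahave} uses precisely this with constant $1$. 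For the first term, the estimation error of $g_k$, I would use Jensen's (or Cauchy--Schwarz) on the average $g_k=\frac1n\sum_i g_k^i$ together with $\nabla F(x_k)=\frac1n\sum_i\nabla F_i(x_k)$:
\begin{equation*}
  \Exp{\normsq{g_k-\nabla F(x_k)}} = \Exp{\normsq{\frac1n\sum_{i=1}^n\big(g_k^i-\nabla F_i(x_k)\big)}} \leq \frac1n\sum_{i=1}^n\Exp{\normsq{g_k^i-\nabla F_i(x_k)}} = \bG^{\rm D}_k,
\end{equation*}
which is exactly the definition \eqref{eq:gk-ef21}. Combining the two bounds gives $\Exp{\normsq{g_k-\nabla F(y_t)}}\leq 2L^2\Exp{\normsq{x_{k+1}-x_k}}+2\bG^{\rm D}_k$, and substituting into the Lemma~\ref{lem:fp-transformation} bound yields \eqref{eq:ef21-lmc-finahave}.

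The only genuinely delicate point is the uniform-in-$t$ bound $\Exp{\normsq{x_k-y_t}}\leq\Exp{\normsq{x_{k+1}-x_k}}$: one must be careful that the Brownian motion driving the interpolating diffusion is the same one whose increment over $[0,\gamma]$ produces $x_{k+1}$, and that the cross term $\Exp{\ps{t\,g_k,\sqrt2(B_t-B_0)}}$ behaves correctly (it is nonpositive in expectation once one conditions on $x_k$, since $g_k$ is $x_k$-measurable and $B_t-B_0$ is mean zero and independent, making the cross term vanish; then $\Exp{\normsq{x_k-y_t}} = t^2\Exp{\normsq{g_k}} + 2td$ is increasing in $t$ and hence maximized at $t=\gamma$). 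Everything else — Jensen, $L$-smoothness, the $2a^2+2b^2$ split — is routine, so I would present those compactly and spend the bulk of the argument justifying the interpolation estimate.
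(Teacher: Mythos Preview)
Your proposal is correct and matches the paper's proof essentially step for step: the paper also splits $g_k-\nabla F(y_t)$ into the estimation error (bounded by $\bG^{\rm D}_k$ via Jensen) and the discretization error (bounded via $L$-smoothness together with the monotonicity of $t\mapsto t^2\Exp{\normsq{g_k}}+2td$, which equals $\Exp{\normsq{x_{k+1}-x_k}}$ at $t=\gamma$). The only nitpick is that $g_k$ is not literally $x_k$-measurable but rather measurable with respect to the full history up to step $k$; what matters, and what you correctly use, is its independence from the fresh Brownian increment $B_t-B_0$, so the cross term vanishes exactly as you say.
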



Let us now add $C\bG^{\rm D}_{k+1}$ to both sides of the inequality \eqref{eq:ef21-lmc-finahave}, where $C > 0$ is a constant to be determined later:
\begin{equation*}
  \begin{aligned}
    \dif{\KL{\rho_t}} + C   \bG^{\rm D}_{k+1}
    &\leq -\frac{3}{4}\FS{\rho_t}+2L^2\Exp{\normsq{x_{k+1}-x_k}}+2 \bG^{\rm D}_k+C  \bG^{\rm D}_{k+1}.
  \end{aligned}
\end{equation*}
Combining \Cref{prop:uplink} and \eqref{eq:ef21-lmc-kk+1} we deduce
\begin{equation*}
  \begin{aligned}
  \dif{\KL{\rho_t}} + C \bG^{\rm D}_{k+1}
    &\leq-\frac{3}{4}\FS{\rho_t} + 2L^2\Exp{\normsq{x_{k+1}-x_k}}+2 \bG^{\rm D}_k\\
    &+C \left((1-p)\bG^{\rm D}_k+(1-p){ }\beta_{\rm D}\Exp{\normsq{x_{k+1}-x_k}}\right)\\
    &=-\frac{3}{4}\FS{\rho_t}+\left(2L^2+C(1-p) \beta_{\rm D}\right)\Exp{\normsq{x_{k+1}-x_k}}+\left(2+C(1-p)\right) \bG^{\rm D}_k\\
   \end{aligned}
\end{equation*}
The lemma below bounds the term $\Exp{\normsq{x_{k+1}-x_k}}$.
\begin{lemma}\label{lem:exp_x_k+1_x_k}
  If $\gamma\leq\frac{1}{2\sqrt{2}L}$, then the iterates of the stochastic LMC algorithm \eqref{eq:stoch-lmc} satisfy the following inequality, where $\bG^{\rm D}_k$ is the Lyapunov term of D-ELF algorithm defined in \eqref{eq:gk-ef21}: 
  \begin{equation}
    \label{eq:ef21-lmc-kk+1}
    \Exp{\normsq{x_{k+1}-x_k}}\leq 8\gamma^2\Exp{\normsq{\nabla F(y_t)}}+4\gamma^2 \bG^{\rm D}_k+4d\gamma.
  \end{equation}
  \end{lemma}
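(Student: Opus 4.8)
\textbf{Proof proposal for Lemma~\ref{lem:exp_x_k+1_x_k}.} The plan is to expand the one-step update directly. Recall that for the D-ELF algorithm the iterate evolves as $x_{k+1}-x_k = -\gamma g_k + \sqrt{2\gamma}Z_k$, where $g_k = \frac{1}{n}\sum_i g_k^i$ and $Z_k\sim\cN(0,I_d)$ is independent of everything up to step $k$. So I would first write
\begin{equation*}
  \Exp{\normsq{x_{k+1}-x_k}} = \Exp{\normsq{-\gamma g_k + \sqrt{2\gamma}Z_k}} \leq 2\gamma^2\Exp{\normsq{g_k}} + 2\cdot 2\gamma\, \Exp{\normsq{Z_k}},
\end{equation*}
using the elementary inequality $\normsq{a+b}\leq 2\normsq{a}+2\normsq{b}$ (the cross term vanishes in expectation since $Z_k$ is mean-zero and independent, but even without that, the factor-$2$ split works). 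Then $\Exp{\normsq{Z_k}} = d$, which produces the $4d\gamma$ term.

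The remaining work is to bound $\Exp{\normsq{g_k}}$. Here I would insert $\nabla F(x_k)$: since $g_k - \nabla F(x_k) = \frac{1}{n}\sum_i (g_k^i - \nabla F_i(x_k))$, Jensen (or Cauchy--Schwarz over the $n$ terms) gives $\Exp{\normsq{g_k - \nabla F(x_k)}} \leq \frac{1}{n}\sum_i \Exp{\normsq{g_k^i - \nabla F_i(x_k)}} = \bG^{\rm D}_k$. Applying the split again,
\begin{equation*}
  \Exp{\normsq{g_k}} \leq 2\Exp{\normsq{\nabla F(x_k)}} + 2\bG^{\rm D}_k.
\end{equation*}
Since $y_0 = x_k$ this is $2\Exp{\normsq{\nabla F(y_0)}} + 2\bG^{\rm D}_k$; combining with the first display yields $\Exp{\normsq{x_{k+1}-x_k}} \leq 4\gamma^2\Exp{\normsq{\nabla F(x_k)}} + 4\gamma^2\bG^{\rm D}_k + 4d\gamma$. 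This is almost the claim, except the gradient is evaluated at $y_0 = x_k$ rather than at $y_t$.

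To move from $\nabla F(y_0)$ to $\nabla F(y_t)$ I would use $L$-smoothness: $\normsq{\nabla F(y_0)} \leq 2\normsq{\nabla F(y_t)} + 2\normsq{\nabla F(y_0) - \nabla F(y_t)} \leq 2\normsq{\nabla F(y_t)} + 2L^2\normsq{y_0 - y_t}$. But $y_0 - y_t$ is itself a partial Langevin step of length $t\le\gamma$, so $\Exp{\normsq{y_0-y_t}}$ is controlled by the same kind of bound we are trying to prove --- this is the one genuinely delicate point, and it is exactly why the hypothesis $\gamma\leq \frac{1}{2\sqrt2 L}$ appears. I would bound $\Exp{\normsq{y_0-y_t}}\leq 2t^2\Exp{\normsq{g_k}} + 4td \leq 2\gamma^2\Exp{\normsq{g_k}} + 4\gamma d$ (for $t\le\gamma$), feed this back into the smoothness inequality, and then into $\Exp{\normsq{x_{k+1}-x_k}}$; the resulting self-referential inequality $\Exp{\normsq{x_{k+1}-x_k}}\leq a\Exp{\normsq{x_{k+1}-x_k}} + (\text{good terms})$ has $a = \cO(L^2\gamma^2) < 1$ precisely when $\gamma\leq\frac{1}{2\sqrt2 L}$, so it can be rearranged, and absorbing the $\cO(L^2\gamma^2)$ factors (which are $\le 1/8$) into the constants gives the stated coefficients $8\gamma^2$, $4\gamma^2$, $4d\gamma$. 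The main obstacle is therefore tracking this circular dependence carefully enough that the constants come out as claimed; everything else is repeated application of $\normsq{a+b}\leq 2\normsq{a}+2\normsq{b}$ and $L$-smoothness.
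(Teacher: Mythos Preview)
Your strategy is the same as the paper's: expand the update, introduce $\bG^{\rm D}_k$ via Jensen, pass from $\nabla F(y_0)$ to $\nabla F(y_t)$ by $L$-smoothness, and close a self-referential inequality using the step-size condition. The one substantive deviation is that you deliberately discard the independence of $Z_k$ and work with the factor-$2$ split $\normsq{a+b}\le 2\normsq{a}+2\normsq{b}$ throughout. That choice is exactly what prevents the constants from closing.

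Concretely: the paper opens with the \emph{equality} $\Exp{\normsq{x_{k+1}-x_k}}=\gamma^2\Exp{\normsq{g_k}}+2d\gamma$ (cross term vanishes by independence), and likewise $\Exp{\normsq{y_t-x_k}}=t^2\Exp{\normsq{g_k}}+2td\le \Exp{\normsq{x_{k+1}-x_k}}$ by monotonicity in $t\le\gamma$. This produces a self-referential coefficient $4L^2\gamma^2\le\tfrac12$ under $\gamma\le\frac{1}{2\sqrt2 L}$, so dividing by $1-4L^2\gamma^2\ge\tfrac12$ gives precisely the factors $8\gamma^2,\,4\gamma^2,\,4d\gamma$. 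In your version, every factor-$2$ split doubles the relevant coefficient, and the self-referential term becomes $8L^2\gamma^2$, which is only $\le 1$ (and equals $1$ at $\gamma=\frac{1}{2\sqrt2 L}$). You therefore cannot rearrange to obtain the stated constants; near the boundary the bound blows up. Your parenthetical remark that ``the cross term vanishes in expectation'' is not optional---it is the step that makes the lemma hold with these constants. If you use independence in both places (the one-step update and the bound on $\Exp{\normsq{y_t-y_0}}$), your argument becomes essentially identical to the paper's and closes cleanly.
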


\Cref{lem:exp_x_k+1_x_k} yields the following
\begin{equation*}
  \begin{aligned}
  \dif{\KL{\rho_t}} + C \bG^{\rm D}_{k+1}
    &{\leq}-\frac{3}{4}\FS{\rho_t}
    +\left(2L^2+C(1-p) \beta_{\rm D}\right)\left(8\gamma^2\Exp{\normsq{\nabla F(y_t)}}+4\gamma^2 \bG^{\rm D}_k 
    + 4d\gamma\right)\\
    &\quad+\left(2+C(1-p)\right) \bG^{\rm D}_k.
  \end{aligned}
\end{equation*}
Let us now apply \Cref{lem:Chew} to the right-hand side. We obtain
\begin{equation*}
  \begin{aligned}
    \dif{\KL{\rho_t}} + C  \bG^{\rm D}_{k+1}
    &\leq-\frac{3}{4}\FS{\rho_t}+\left(2L^2+C(1-p) \beta_{\rm D}\right)
    \left(8\gamma^2\left(\FS{\rho_t}+2dL\right)+4\gamma^2\bG^{\rm D}_k+4d\gamma\right)\\
    &\quad+\left(2+C(1-p)\right)\bG^{\rm D}_k\\
    &=-\left(\frac{3}{4}-8\gamma^2\left(2L^2+C(1-p) \beta_{\rm D}\right)\right)\FS{\rho_t}\\
    & \quad +\left(8L^2\gamma^2+C(1-p)\left(4 \gamma^2\beta_{\rm D}+1\right)+2\right)\bG^{\rm D}_k\\
    &\quad+{\left(2L^2+C(1-p) \beta_{\rm D}\right)\left(16L\gamma^2d+4d\gamma\right)}.\\
   \end{aligned}
  \end{equation*}
  From the definition of $\tau$ we obtain the following:
  \begin{equation}
    \begin{aligned}
      \label{eq:ef21-lmc-jp}
      \dif{\KL{\rho_t}} + C  \bG^{\rm D}_{k+1}  
      &\leq  -\left(\frac{3}{4}-8\gamma^2\left(2L^2+C(1-p) \beta_{\rm D}\right)\right)\FS{\rho_t}\\
      &\quad+\left(8L^2\gamma^2+C(1-p)\left(4 \gamma^2\beta_{\rm D}+1\right)+2\right)\bG^{\rm D}_k+\tau.\\
    \end{aligned}
  \end{equation}
Let $C=\left(8L^2\gamma^2+C(1-p)\left(4 \gamma^2\beta_{\rm D}+1\right)+2\right)e^{\mu \gamma}$.  
Solving this linear equation w.r.t. $C$, we get
\begin{equation}
  \label{eq:ef21-lmc-newC}
  C=\frac{8L^2\gamma^2 + 2}{e^{-\mu \gamma}-(1-p)\left(4 \gamma^2\beta_{\rm D}+1\right)}.
\end{equation}

Without loss of generality we may assume that  $\mu \gamma <1 $  and thus we have $e^{\mu \gamma}\leq 1+2\mu \gamma$. 
In order for $C$ to be positive, we need to assure that
\begin{equation*}
  1-(1-p)\left(4 \beta_{\rm D} \gamma^2+1\right)\left(1+2\mu \gamma\right)>0.
\end{equation*}
The latter is equivalent to 
\begin{equation*}
  {\frac{1-p}{p}8 \mu\beta_{\rm D} \gamma^3} + {\frac{1-p}{p}4 \beta_{\rm D} \gamma^2} + {\frac{1-p}{p}2\mu \gamma} <1.
\end{equation*}
A simple solution to this inequality is to make all three terms smaller than $1/3$.
The latter is equivalent to
\begin{equation}\label{old_re:ree}
  \gamma< \min \left\{\left(\frac{p}{24 \mu\beta_{\rm D}(1-p)}\right)^{1/3}, \left(\frac{p}{12 \beta_{\rm D} (1-p)}\right)^{1/2},\frac{p}{6\mu(1-p)} \right\}.
\end{equation}
On the other hand, we will require the coefficient of $\FS{\rho_t}$ in \eqref{eq:ef21-lmc-jp} to be negative. This is to ensure contraction. That means
\begin{equation*}
  8\gamma^2\left(2L^2 + C(1-p)  \beta_{\rm D}\right)=8\gamma^2\left(2L^2+\frac{(8L^2\gamma^2 + 2) (1-p)  \beta_{\rm D}}{e^{-\mu \gamma}-(1-p)\left(4 \gamma^2\beta_{\rm D}+1\right)}\right)\leq\frac{1}{4}.
\end{equation*}
Solving this inequality we get 
\begin{equation}\label{eq:ef21-lmc-hbounddddd}
  \gamma\leq\frac{1}{2}\sqrt{\frac{1-(1-p)e^{\mu \gamma}}{ \brr{16+(1-p)(17\beta_{\rm D}-16)e^{\mu \gamma}}}}.
\end{equation}
From \eqref{old_re:ree}, we know that $\gamma< \frac{p}{6\mu(1-p)}$, so $e^{\mu \gamma}\leq 1+2\mu \gamma\leq
1+\frac{p}{3(1-p)}$. Inserting this upper bound  into \eqref{eq:ef21-lmc-hbounddddd}, we get a lower bound on the right hand side. That is 
\begin{equation*}
  \begin{aligned}
    \frac{1}{2}\sqrt{\frac{2p}{ \brs{17\beta_{\rm D}(3-2p)+32p}}}
    &=\frac{1}{2}\sqrt{\frac{1-(1-p)(1+\frac{p}{3(1-p)})}{  \brr{16+(1-p)(17\beta_{\rm D}-16)(1+\frac{p}{3(1-p)})}}}\\
    &\leq\frac{1}{2}\sqrt{\frac{1-(1-p)e^{\mu \gamma}}{ \brr{16+(1-p)(17\beta_{\rm D}-16)e^{\mu \gamma}}}}.
  \end{aligned}
\end{equation*}
So we need 
\begin{equation*} 
    \gamma<\min \left\{\frac{1}{2}\sqrt{\frac{2p}{ \brs{17\beta_{\rm D}(3-2p)+32p}}},\left(\frac{p}{24 \mu\beta_{\rm D}(1-p)}\right)^{1/3}, \left(\frac{p}{12 \beta_{\rm D} (1-p)}\right)^{1/2},\frac{p}{6\mu(1-p)}  \right\}.
\end{equation*}

We can further simplify this inequality. 
The first and third terms  are larger than $a:=\frac{1}{14}\sqrt{\frac{p}{ (1+\beta_{\rm D})}},$ while as the fourth term is larger than $b := \frac{p}{6\mu}$.  
On the other hand, $\min\{a,b\}$ is less than the second term. 
Indeed,
\begin{equation*}
  \min\{a,b\}\leq a^{2/3}b^{1/3}=\left(\frac{p^2}{1176 \mu(1+\beta_{\rm D})}\right)^{1/3}
  \leq \left(\frac{p}{24 \mu\beta_{\rm D}(1-p)}\right)^{1/3}.
\end{equation*}

Summing up, we obtain the following bound on the step-size that guarantees $C\geq0$ and \eqref{eq:ef21-lmc-hbounddddd}:
\begin{equation*}
    \gamma\leq\min \left\{\frac{1}{14}\sqrt{\frac{p}{ (1+\beta_{\rm D})}},\frac{p}{6\mu}\right\}.
\end{equation*}
Therefore, the above the conditions are satisfies. This yields the following:
\begin{equation}
  \label{eq:ef21-lmc-finnnnn}
  \dif{\KL{\rho_t}}+C\bG^{\rm D}_{k+1}\leq -\frac{1}{2}\FS{\rho_t} + e^{-\mu \gamma} C\bG^{\rm D}_k + C\tau.
\end{equation}

Since $\pi$ satisfies Log-Sobolev inequality, we deduce
\begin{equation}
  \label{eq:ef21-lmc-LSILA}
  \dif{\KL{\rho_t}}+C\bG^{\rm D}_{k+1}\leq-\mu\KL{\rho_t}+e^{-\mu \gamma}C\bG^{\rm D}_k+\tau.
\end{equation}
One may check that the  equivalent integral form of \eqref{eq:ef21-lmc-LSILA} 
 satisfies \eqref{eq:grrrr1} with $\phi(t)=\KL{\rho_t},~B(t)=\left(e^{- \mu \gamma}C\bG^{\rm D}_k-C\bG^{\rm D}_{k+1}+\tau\right)t+\KL{\rho_{k\gamma}},~C(t)=-\mu$. Therefore,  from \Cref{lem:gronwall} we deduce 
\begin{equation*}
  \label{old_eq;KLKLKL}
  \KL{\rho_t}\leq e^{-\mu t}\KL{\rho_{k\gamma}}+\frac{1-e^{-\mu t}}{\mu}\left(e^{-\mu \gamma}C\bG^{\rm D}_k-C\bG^{\rm D}_{k+1}+\tau\right),
\end{equation*}
let $t=\gamma$ and $\beta=e^{\mu \gamma}$, then we have
\begin{equation}
  \begin{aligned}
    \KL{\rho_{(k+1)\gamma}}+\frac{1-e^{-\mu \gamma}}{\mu}C\bG^{\rm D}_{k+1}&\leq e^{-\mu \gamma}\left(\KL{\rho_{k\gamma}}+e^{\mu \gamma}\frac{1-e^{-\mu \gamma}}{\mu}\beta^{-1}C\bG^{\rm D}_k\right)+\frac{1-e^{-\mu \gamma}}{\mu}\tau\\
    &=e^{-\mu \gamma}\left(\KL{\rho_{k\gamma}}+\frac{1-e^{-\mu \gamma}}{\mu}C\bG^{\rm D}_{k}\right)+\frac{1-e^{-\mu \gamma}}{\mu}\tau.
  \end{aligned}
\end{equation} 

Repeating this step for $k=0,1,2,\cdots,K-1$, we obtain 
\begin{equation*}
    \begin{aligned}
      \bH_K &\leq e^{- K \mu \gamma}\bH_0 + \frac{1-e^{-K\mu \gamma}}{\mu}\tau.
    \end{aligned}
\end{equation*}
This proves \Cref{thm:ef21-lmc} for D-ELF.

\paragraph{The proof for P-ELF:} 

The gradient estimator $\nabla f_{\xi_k}(x_k)$ in this case is equal to 
\begin{equation*}
  \nabla f_{\xi_k}(x_k) = \nabla F(w_k) =\frac{1}{n} \sum_{i=1}^n \nabla F_i(w_k). 
\end{equation*}
From $L_i$-smoothness of the $i$-th component function $F_i$ we deduce the following relation:
\begin{equation} \label{eq:gd-gp}
  \begin{aligned}
  \bG^{\rm D}_k &=   \frac{1}{n} \sum_{i}^{n} \EE\brs{\norm{\nabla F_i(w_k) - \nabla F_i(x_k)}^2}\\
   & \leq  \frac{1}{n} \sum_{i}^{n} \EE\brs{L_i^2 \norm{ w_k - x_k}^2}\\
   & = \bG^{\rm P}_k.
  \end{aligned}
\end{equation}
Therefore, combining this inequality with \Cref{lem:dif-kl-fi} we obtain
\begin{equation*}
  \begin{aligned}
    \dif{\KL{\rho_t}} 
    &\leq -\frac{3}{4}\FS{\rho_t}+2L^2\Exp{\normsq{x_{k+1}-x_k}}+2\bG^{\rm D}_k \\
    &\leq -\frac{3}{4}\FS{\rho_t}+2L^2\Exp{\normsq{x_{k+1}-x_k}}+2  \bG^{\rm P}_k.
  \end{aligned}
\end{equation*}
The latter means that we can repeat exactly the rest of the proof of D-ELF by replacing $\bG^{\rm D}_k$ with $ \bG^{\rm P}_k$ and using \Cref{prop:downlink} instead of \Cref{prop:uplink}.
Therefore, 
\begin{equation*}
    \begin{aligned}
      \bH_K &\leq e^{- K \mu \gamma}\bH_0 + \frac{1-e^{-K\mu \gamma}}{\mu}\tau.
    \end{aligned}
\end{equation*}
This concludes the proof of \Cref{thm:ef21-lmc}.

\subsection{Proof of \Cref{thm:bidir}}\label{sec:proof_bidir}
We recall the definition of the Lyapunov term $\bG^{\rm D}_k$:
\begin{equation*}
  \bG^{\rm D}_k := \frac{1}{n} \sum_{i}^{n} \EE\brs{\norm{g^i_k - \nabla F_i(x_k)}^2}.
\end{equation*}

As described in \Cref{sec:proof-scheme}, we use the interpolation proof scheme.
That is for the $k$-th iteration we define the process $y_t$ as in \eqref{eq:ld-stoch}.
Thus, from \Cref{lem:fp-transformation} we have
\begin{equation*}
  \begin{aligned}
    \dif{\KL{\rho_t}} 
     & \leq -\frac{3}{4}\FS{\rho_t}+\Exp{\normsq{f_{\xi_k}(y_0)-\nabla F(y_t)}}
    \\ & = -\frac{3}{4}\FS{\rho_t}+\Exp{\normsq{ g_0 - \nabla F(y_t)}}.
  \end{aligned}
\end{equation*}

Combining this with \Cref{prop:bidir} and \eqref{eq:tk-rec}, we obtain
\begin{equation*}
  \begin{aligned}
    \dif{\KL{\rho_t}} &+ C \bG^{\rm D}_{k+1} + D \bG^{\rm P}_{k+1}
    \\ & \leq 
    -\frac{3}{4}\FS{\rho_t}+2L^2\Exp{\normsq{x_{k+1}-x_k}}+2 \bG^{\rm D}_k+C  \bG^{\rm D}_{k+1}  + D \bG^{\rm P}_{k+1}
    \\ & \leq 
    -\frac{3}{4}\FS{\rho_t}+2L^2\Exp{\normsq{x_{k+1}-x_k}}+2 \bG^{\rm D}_k+C \brr{\lambda_1 \bG^{\rm D}_k + \lambda_2 \EE\brs{\norm{ x_{k}-  x_{k+1}}^2} + \lambda_3 \bG^{\rm P}_k } 
    \\ & + D \brr{(1-\alpha_{\rm P}) (1+w)\bG^{\rm P}_k
              + (1-\alpha_{\rm P}) (1 + w^{-1})\bar{L}\EE\brs{\norm{  x_{k} -  x_{k+1}}^2} }
    \\ & = 
    -\frac{3}{4}\FS{\rho_t} + \brr{2L^2 + C\lambda_2 + D(1-\alpha_{\rm P}) (1 + w^{-1}) \bar{L}}\EE\brs{\norm{  x_{k} -  x_{k+1}}^2}
     \\ & + (2 + C\lambda_1)\bG^{\rm D}_k  + \brr{C\lambda_3 +  D(1-\alpha_{\rm P}) (1+w)}\bG^{\rm P}_k.
      \end{aligned}
\end{equation*}
\Cref{lem:exp_x_k+1_x_k} yields
\begin{equation*}
  \Exp{\normsq{x_{k+1}-x_k}}\leq 8\gamma^2\Exp{\normsq{\nabla F(y_t)}}+4\gamma^2 \bG^{\rm D}_k + 4d\gamma,
\end{equation*}
for $\gamma < \nicefrac{1}{8L}$. 
The latter condition on the step-size is a consequence of our assumptions from the statement of \Cref{thm:bidir}. 
Therefore,
\begin{equation*}
  \begin{aligned}
    \dif{\KL{\rho_t}} &+ C \bG^{\rm D}_{k+1} + D \bG^{\rm P}_{k+1}
     \\ & \leq 
    -\frac{3}{4}\FS{\rho_t} + \brr{2L^2 + C\lambda_2 + D(1-\alpha_{\rm P}) (1 + w^{-1}) \bar{L}}\brr{8\gamma^2\Exp{\normsq{\nabla F(y_t)}}+4\gamma^2 \bG^{\rm D}_k + 4d\gamma}
     \\ & + (2 + C\lambda_1)\bG^{\rm D}_k  + \brr{C\lambda_3 +  D(1-\alpha_{\rm P}) (1+w)}\bG^{\rm P}_k.
  \end{aligned}
\end{equation*}

Applying \Cref{lem:Chew} we deduce 
\begin{equation*}
  \begin{aligned}
    \dif{\KL{\rho_t}} &+ C \bG^{\rm D}_{k+1} + D \bG^{\rm P}_{k+1}
     \\ & \leq 
    -\frac{3}{4}\FS{\rho_t} + \brr{2L^2+ C\lambda_2  + D(1-\alpha_{\rm P}) (1 + w^{-1}) \bar{L}}\brr{8\gamma^2\brs{\FS{\rho_t}+2dL}+4\gamma^2 \bG^{\rm D}_k + 4d\gamma}
     \\ & + (2 + C\lambda_1)\bG^{\rm D}_k  + \brr{C\lambda_3 +  D(1-\alpha_{\rm P}) (1+w)}\bG^{\rm P}_k
    \\ & =
    \brr{-\frac{3}{4} + 8\gamma^2 \brr{2L^2+ C\lambda_2  + D(1-\alpha_{\rm P}) (1 + w^{-1}) \bar{L}} }\FS{\rho_t} 
     \\ & + \brc{2 + C\lambda_1 + 4\gamma^2\brr{2L^2 + C\lambda_2  + D(1-\alpha_{\rm P}) (1 + w^{-1}) \bar{L}}}\bG^{\rm D}_k + \brr{C\lambda_3 + D(1-\alpha_{\rm P}) (1+w)}\bG^{\rm P}_k
      \\ & + {\brr{2L^2 + C\lambda_2  + D(1-\alpha_{\rm P}) (1 + w^{-1}) \bar{L}}\brr{16\gamma^2{dL}+ 4d\gamma}}.
  \end{aligned}
\end{equation*}
Let us choose $C$ and $D$ to satisfy 
\begin{equation}\label{eq:CD-def}
  \begin{aligned}
    C &= \frac{2.125}{e^{-\mu \gamma} -\lambda_1} \quad \text{and} \quad
    D =\frac{ {2.125}\lambda_3}{ \brr{e^{-\mu \gamma} -\lambda_1}\brr{e^{-\mu \gamma}  - (1-\alpha_{\rm P}) (1+w) }},
  \end{aligned}
\end{equation}
where $\mu$ is the constant from Log-Sobolev inequality.
In order for $C$ and $D$ to be positive we need $\lambda_1$ and  $(1-\alpha_{\rm P}) (1+w)$ to be smaller than  $e^{-\mu\gamma}$. 
We will choose $w$ and  $q = s$ as solutions to the following equations:
\begin{equation}\label{eq:qw}
   \begin{aligned}
      \lambda_1 = (1 - \alpha_{\rm D})(1 + q)^2 &= 1 - \frac{\alpha_{\rm D}}{2}; \\ 
      (1-\alpha_{\rm P}) (1+w) &= 1 - \frac{\alpha_{\rm P}}{2}.
   \end{aligned}
 \end{equation} 
 Then, 
 \begin{equation}\label{eq:gamma-alpha}
   e^{-\mu\gamma} > 1 - \mu\gamma > \max\brc{1 - \nicefrac{\alpha_{\rm D}}{4},1 -\nicefrac{ \alpha_{\rm P}}{4}}
 \end{equation} thus the denominators are positive. 
 Furthermore, 
 \begin{equation*}
   D =\frac{ {2.125}\lambda_3}{ \brr{e^{-\mu \gamma} -\lambda_1}\brr{e^{-\mu \gamma}  - (1-\alpha_{\rm P}) (1+w) }}
   \leq \frac{4C\lambda_3}{\alpha_{\rm P}}.
 \end{equation*}
 Recall that the definitions of $\lambda_2 $ and $\lambda_3$ are given in \eqref{eq:lambda-def}.
    Since $(1-\alpha_{\rm P}) (1 + w) < 1$, from the definition of $\lambda_3$ we have 
     \begin{equation*}
      \begin{aligned}
        \lambda_3
        &= \brr{2(1-\alpha_{\rm D})(1+q)(1+q^{-1}) + (1+q^{-1})} (1-\alpha_{\rm P}) (1 + w)\\
        &\leq { \brr{2(1-\alpha_{\rm D})(2+q+q^{-1}) + (1+q^{-1})} (1-\alpha_{\rm P}) (1 + w)} \\
        &\leq  \brr{2(1-\alpha_{\rm D})(2+q+q^{-1}) + (1+q^{-1})}.
      \end{aligned}
    \end{equation*}
    Therefore, \eqref{eq:lambda-def} implies
     \begin{equation*}
      \begin{aligned}
        \lambda_3(1-\alpha_{\rm P}) (1 + w^{-1}) \bar{L} 
        &= \brr{2(1-\alpha_{\rm D})(2+q+q^{-1}) + (1+q^{-1})} (1-\alpha_{\rm P}) (1 + w^{-1}) \bar{L} \leq
        \lambda_2.
      \end{aligned}
    \end{equation*}
    Thus, 
    \begin{equation*}
    \begin{aligned}
      \gamma^2 \brr{2L^2 + C\lambda_2 + D(1-\alpha_{\rm P}) (1 + w^{-1}) \bar{L}}
      &\leq   \gamma^2 \brr{2L^2 + C\lambda_2 + \frac{4C\lambda_3}{\alpha_{\rm P}} (1-\alpha_{\rm P}) (1 + w^{-1}) \bar{L}}
       \\
      & \leq \gamma^2 \brr{2L^2 + C\lambda_2 +   \frac{4C\lambda_2}{\alpha_{\rm P}}}\\
      & \leq \gamma^2 \brr{2L^2 +   \frac{5C\lambda_2}{\alpha_{\rm P}}}.
    \end{aligned}
  \end{equation*}
  The next lemma bounds the right hand side of the previous inequality by a constant. 
  This will allow us to get a negative coefficient for the $\FS{\rho_t}$ term. 
\begin{lemma}\label{lem:gamma-cond}
  Suppose $u=1$, $q = s$, $C$ and $D$ are defined as in \eqref{eq:CD-def}. Let  \eqref{eq:qw} and \eqref{eq:gamma-alpha}
  also be true.
  Under the assumptions of \Cref{thm:bidir}, the step-size satisfies the following inequality:  
  \begin{equation*}
    \gamma^2 \brr{2L^2 +   \frac{5C\lambda_2}{\alpha_{\rm P}}} < \frac{1}{32}.    
  \end{equation*}
\end{lemma}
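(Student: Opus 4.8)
The plan is to reduce the statement to a single estimate of the shape $2L^2+\tfrac{5C\lambda_2}{\alpha_{\rm P}}\le \tfrac{c\,\bar{L}}{\alpha_{\rm D}^2\alpha_{\rm P}^2}$ for an explicit universal constant $c$, and then let the third term of the step-size hypothesis of \Cref{thm:bidir} finish the job. Indeed, that hypothesis gives $\gamma^2\le \tfrac{\alpha_{\rm D}^2\alpha_{\rm P}^2}{495^2\,(1-\alpha_{\rm D}/2)(1-\alpha_{\rm P}/2)\,\bar{L}}$, so multiplying by the displayed bound yields $\gamma^2\bigl(2L^2+\tfrac{5C\lambda_2}{\alpha_{\rm P}}\bigr)\le \tfrac{c}{495^2(1-\alpha_{\rm D}/2)(1-\alpha_{\rm P}/2)}$; since $\alpha_{\rm D},\alpha_{\rm P}\in(0,1]$ we have $(1-\alpha_{\rm D}/2)(1-\alpha_{\rm P}/2)\ge \tfrac14$, hence the quantity is at most $4c/495^2$, which is strictly less than $\tfrac1{32}$ as soon as $c<495^2/128\approx 1914$. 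So everything comes down to bounding $L^2$, $C$, and $\lambda_2$ with a total constant below that threshold.

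For $L^2$: since $F=\tfrac1n\sum_i F_i$ is a convex combination of $L_i$-smooth functions, it is $L$-smooth with $L\le\tfrac1n\sum_iL_i$, and Cauchy--Schwarz (equivalently Jensen) gives $L^2\le\bigl(\tfrac1n\sum_iL_i\bigr)^2\le\tfrac1n\sum_iL_i^2=\bar{L}$, so $2L^2\le 2\bar{L}$. For $C$: by \eqref{eq:qw} we have $\lambda_1=(1-\alpha_{\rm D})(1+s)^2=1-\alpha_{\rm D}/2$, and by \eqref{eq:gamma-alpha} we have $e^{-\mu\gamma}>1-\alpha_{\rm D}/4$, so $e^{-\mu\gamma}-\lambda_1>\alpha_{\rm D}/4$ and therefore $C=\tfrac{2.125}{e^{-\mu\gamma}-\lambda_1}<\tfrac{8.5}{\alpha_{\rm D}}$.

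The main work is bounding $\lambda_2$ from \eqref{eq:lambda-def} specialised to $u=1$, $q=s$. The useful algebraic identities are $(1-\alpha_{\rm D})(1+s)(1+s^{-1})=\tfrac{(1-\alpha_{\rm D})(1+s)^2}{s}=\tfrac{1-\alpha_{\rm D}/2}{s}$ and $(1-\alpha_{\rm P})(1+w^{-1})=\tfrac{(1-\alpha_{\rm P})(1+w)}{w}=\tfrac{1-\alpha_{\rm P}/2}{w}$, so it only remains to lower-bound $s$ and $w$. Solving \eqref{eq:qw} gives $s=\sqrt{\tfrac{1-\alpha_{\rm D}/2}{1-\alpha_{\rm D}}}-1$; rationalising the numerator, $s=\tfrac{\alpha_{\rm D}/2}{\sqrt{1-\alpha_{\rm D}}\,(\sqrt{1-\alpha_{\rm D}/2}+\sqrt{1-\alpha_{\rm D}})}\ge\tfrac{\alpha_{\rm D}}{4}$ (using $\sqrt{1-\alpha_{\rm D}}\le1$ and $\sqrt{1-\alpha_{\rm D}/2}+\sqrt{1-\alpha_{\rm D}}\le2$), and likewise $w=\tfrac{\alpha_{\rm P}/2}{1-\alpha_{\rm P}}\ge\tfrac{\alpha_{\rm P}}{2}$. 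Feeding $s^{-1}\le\tfrac4{\alpha_{\rm D}}$, $1+s^{-1}\le\tfrac5{\alpha_{\rm D}}$, $\tfrac{1-\alpha_{\rm D}/2}{s}\le\tfrac4{\alpha_{\rm D}}$ and $\tfrac{1-\alpha_{\rm P}/2}{w}\le\tfrac2{\alpha_{\rm P}}$ into \eqref{eq:lambda-def} and collecting terms gives $\lambda_2\le\tfrac{c_2\,\bar{L}}{\alpha_{\rm D}\alpha_{\rm P}}$ with $c_2$ a small explicit constant (around $34$ with the bounds above). Hence $\tfrac{5C\lambda_2}{\alpha_{\rm P}}\le\tfrac{5\cdot8.5\cdot c_2\,\bar{L}}{\alpha_{\rm D}^2\alpha_{\rm P}^2}$, and adding $2L^2\le2\bar{L}\le\tfrac{2\bar{L}}{\alpha_{\rm D}^2\alpha_{\rm P}^2}$ yields the target with $c=5\cdot8.5\cdot c_2+2<1914$.

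The only genuinely delicate aspect is the numerical bookkeeping: the accumulated constant must land below $495^2/128$, so none of the $1-\alpha_{\rm D}/2$, $1-\alpha_{\rm P}/2$, $s$, $w$ factors may be discarded too wastefully — in particular the lower bound $s\ge\alpha_{\rm D}/4$ via the rationalisation step, and keeping $(1-\alpha_{\rm P})(1+w^{-1})\le 2/\alpha_{\rm P}$ rather than a crude $3/\alpha_{\rm P}$, are exactly what make the argument close with the constant $495$ chosen in \Cref{thm:bidir}. Everything else is repeated application of Young's inequality together with the bounds $\alpha_{\rm D},\alpha_{\rm P}\le1$.
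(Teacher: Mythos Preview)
Your proposal is correct and follows essentially the same route as the paper: bound $C$ via $e^{-\mu\gamma}-\lambda_1>\alpha_{\rm D}/4$, bound $\lambda_2$ in terms of $\bar L/(\alpha_{\rm D}\alpha_{\rm P})$ using the explicit solutions $s=q$ and $w$ of \eqref{eq:qw}, and then invoke the third term of the step-size hypothesis in \Cref{thm:bidir}. The only organizational difference is that the paper splits the target into the two sufficient conditions $\gamma^2\le 1/(192L^2)$ and $\gamma^2\le \alpha_{\rm P}/(240C\lambda_2)$ and keeps the factors $(1-\alpha_{\rm D}/2)(1-\alpha_{\rm P}/2)$ in its bound on $\lambda_2$ so that they cancel exactly against the hypothesis, whereas you bound the whole bracket at once and discard those factors via $(1-\alpha_{\rm D}/2)(1-\alpha_{\rm P}/2)\ge 1/4$; your rationalisation step $s\ge\alpha_{\rm D}/4$ is the same idea the paper uses when lower-bounding $qw$.
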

The proof is postponed to \Cref{proof:lem:gamma-cond}.
Applying \Cref{lem:gamma-cond} to the first term we finally obtain the following recurrent inequality
  \begin{equation*}
  \begin{aligned}
    \dif{\KL{\rho_t}} &+ C \bG^{\rm D}_{k+1} + D \bG^{\rm P}_{k+1}\\
    & \leq -\frac{1}{2}\FS{\rho_t} 
    +  \brr{2.125 + C\lambda_1}\bG^{\rm D}_k + \brr{C\lambda_3 + D(1-\alpha_{\rm P}) (1+w)}\bG^{\rm P}_k
      \\ & + {\brr{2L^2 + C\lambda_2  + D(1-\alpha_{\rm P}) (1 + w^{-1}) \bar{L}}\brr{16\gamma^2{dL}+ 4d\gamma}}\\
     & \leq -\frac{1}{2}\FS{\rho_t} 
    +  \brr{2.125 + C\lambda_1}\bG^{\rm D}_k + \brr{C\lambda_3 + D(1-\alpha_{\rm P}) (1+w)}\bG^{\rm P}_k
      \\ & + \underbrace{\brr{2L^2 +   \frac{5C\lambda_2}{\alpha_{\rm P}}}\brr{16\gamma^2{dL}+ 4d\gamma}}_{:=\tau}.
    \end{aligned}
\end{equation*}

Then, inserting the values of $C$ and $D$, we get
\begin{equation*}
  \begin{aligned}
    \dif{\KL{\rho_t}} &+ C \bG^{\rm D}_{k+1} + D \bG^{\rm P}_{k+1}
       \leq 
    -\frac{1}{2} \FS{\rho_t} 
    + e^{-\mu \gamma} C \bG^{\rm D}_k + e^{-\mu \gamma} D\bG^{\rm P}_k + \tau.
  \end{aligned}
\end{equation*}
Let us now apply LSI: 
\begin{equation*}
  \begin{aligned}
    \dif{\KL{\rho_t}} &+ C \bG^{\rm D}_{k+1} + D \bG^{\rm P}_{k+1}& \leq - {\mu}\KL{\rho_t} + e^{-\mu \gamma} C \bG^{\rm D}_k + e^{-\mu \gamma} D\bG^{\rm P}_k + \tau.
  \end{aligned}
\end{equation*}
Hence, the derivative of the function $\KL{\rho_t}$ is bounded by itself plus a term that does not depend on $t$. \Cref{lem:gronwall} yields the following:
\begin{equation*}
  \begin{aligned}
    {\KL{\rho_t}} & \leq   e^{-\mu t}  \KL{\rho_0} 
    +\frac{1 - e^{-\mu t} }{\mu} \brr{e^{-\mu \gamma} C \bG^{\rm D}_k + e^{-\mu \gamma} D\bG^{\rm P}_k -  C \bG^{\rm D}_{k+1} - D \bG^{\rm P}_{k+1} + \tau}.
  \end{aligned}
\end{equation*}
In particular, for $t = \gamma$, we have
\begin{equation*}
  \begin{aligned}
    {\KL{\rho_{\gamma}}}  +\frac{1 - e^{-\mu \gamma} }{\mu}\brr{C \bG^{\rm D}_{k+1} + D \bG^{\rm P}_{k+1}} 
    & \leq   e^{-\mu \gamma}  \KL{\rho_0} 
    +\frac{1 - e^{-\mu \gamma} }{\mu} \brr{e^{-\mu \gamma} C \bG^{\rm D}_k + e^{-\mu \gamma} D\bG^{\rm P}_k  + \tau}
    \\ & =   e^{-\mu \gamma} \brs{ \KL{\rho_0} 
        +\frac{1 - e^{-\mu \gamma} }{\mu} \brr{  C \bG^{\rm D}_k +   D\bG^{\rm P}_k} } + \frac{1 - e^{-\mu \gamma} }{\mu}\tau.
  \end{aligned}
\end{equation*}
We first recall that $\rho_{\gamma} = \nu_{K+1}$ and $\rho_0 = \nu_K$. 
Repeating this inequality recurrently we deduce the following bound:
\begin{equation*}
  \begin{aligned}
    {\KL{\nu_K}}  +\frac{1 - e^{-\mu \gamma} }{\mu}\brr{C \bG^{\rm D}_{K} + D \bG^{\rm P}_{K}} 
    & \leq e^{-\mu \gamma K} \brs{ \KL{\rho_0} 
    +\frac{1 - e^{-\mu \gamma} }{\mu} \brr{  C \bG^{\rm D}_0 +   D\bG^{\rm P}_0} } + \frac{\tau }{\mu}.
  \end{aligned}
\end{equation*}
This concludes the proof of \Cref{thm:bidir}. 

\begin{remark}
  One may check, that repeating the analysis for the case when one of the compressor operators $(\alpha = 1)$ is the identity, we will recover the previously known algorithms.
\end{remark}

\subsection{Proof of \Cref{corr:belf-conv}}\label{sec:proof-corr-belf}
  First let us upper bound $\tau$. Similar to the proof of \Cref{corr:delf-conv}, $\brr{16\gamma^2{dL}+ 4d\gamma} < 5d\gamma$. 
  Thus,
  \begin{equation*}
    \begin{aligned}
       \tau 
       & \leq
       \brr{2L^2 + \frac{5C\lambda_2}{\alpha_{\rm P}}}5d\gamma
        \leq 
        \frac{45\lambda_2}{\alpha_{\rm D}\alpha_{\rm P}} { 5d\gamma} 
        \\ & =
        \cO\brr{   \frac{\brr{1-\frac{\alpha_{\rm D}}{2} }\brr{1-\frac{\alpha_{\rm P}}{2}}}{qw{\alpha_{\rm D}\alpha_{\rm P}}\brr{1  - \alpha_{\rm P}}\brr{1  - \alpha_{\rm D}}} \bar{L}  d\gamma}
         \\ &  
         = \cO\brr{  \frac{\bar{L}  d\gamma}{qw{\alpha_{\rm D}\alpha_{\rm P}}} }.
    \end{aligned}
  \end{equation*}

\section{Proofs of the lemmas}

\subsection{Proof of \Cref{lem:fp-transformation}}\label{sec:lem-fp-trans}
   Let $\rho_{0t}$ denote the joint distribution of $\left(y_0,\xi,y_t\right)$, which we write in terms of the conditionals and marginals as
  \begin{equation*}
  \rho_{0 t}\left(z,y_{0},\xi\right)=\rho_{0}\left(y_{0},\xi\right) \rho_{t \mid 0}\left(z \mid y_{0},\xi\right)=\rho_{t}\left(z\right) \rho_{0 \mid t}\left(y_{0},\xi \mid z\right) .
  \end{equation*}
  Conditioning on $\left(y_{0},\xi\right)$, the drift vector field $f_{\xi_k}(y_0)$ is a constant, so the Fokker-Planck formula for the conditional density $\rho_{t \mid 0}\left(z \mid y_{0},\xi\right)$ is given by
  \begin{equation*}
    \frac{\partial \rho_{t \mid 0}\left(z \mid y_{0},\xi\right)}{\partial t}=\nabla_z \cdot\left(\rho_{t \mid 0}\left(z \mid y_{0},\xi\right) f_{\xi}\left(y_{0}\right)\right)+\Delta \rho_{t \mid 0}\left(z \mid y_{0},\xi \right).
  \end{equation*}
  To derive the evolution of $\rho_{t}$, we integrate w.r.t. $\left(y_{0},\xi\right) \sim \rho_{0}$:
  \begin{equation}
    \begin{aligned}
      \frac{\partial \rho_{t}(z)}{\partial t} &=\int_{\mathbb{R}^{d}\times\Xi} \frac{\partial \rho_{t \mid 0}\left(z \mid y_{0},\xi\right)}{\partial t} \rho_{0}\left(y_{0},\xi\right) \rmd y_{0} \rmd \xi\\
      &=\int_{\mathbb{R}^{d}\times\Xi}\left(\nabla_z \cdot\left(\rho_{t \mid 0}\left(z \mid y_{0},\xi\right) f_{\xi}\left(y_{0}\right)\right)+\Delta \rho_{t \mid 0}\left(z \mid y_{0},\xi \right)\right) \rho_{0}\left(y_{0},\xi\right) \rmd y_{0} \rmd \xi \\
      &=\int_{\mathbb{R}^{d}\times\Xi}\left(\nabla_z \cdot\left(\rho_{0t}\left(z, y_{0},\xi\right)  f_{\xi}\left(y_{0}\right)\right)+\Delta \rho_{0t}\left(z, y_{0},\xi\right)\right) \rmd y_{0} \rmd \xi \\
      &=\nabla_z \cdot\left(\rho_{t}(z) \int_{\mathbb{R}^{d}\times\Xi} \rho_{0 \mid t}\left(y_{0} ,\xi\mid z\right) f_{\xi}\left(y_{0}\right) \rmd y_{0} \rmd \xi\right)+\Delta \rho_{t}(z) \\
      &=\nabla_z \cdot\left(\rho_{t}(z) \mathbb{E}_{\rho_{0 \mid t}}\left[f_{\xi}\left(y_{0}\right) \mid y_{t}=z\right]\right)+\Delta \rho_{t}(z).
    \end{aligned}
  \end{equation}
  Writing down the definition of KL divergence and using Fubini's theorem, we deduce
  \begin{equation}
    \begin{aligned}
      \dif{\KL{\rho_t}}
      &=\int_{\RR^d} \frac{\partial \rho_{t}(z)}{\partial t} \log\brr{\frac{\rho_t}{\pi}}(z)\rmd z\\
      &=\int_{\RR^d}\left(\nabla_z \cdot\left(\rho_{t}(z) 
      \mathbb{E}_{\rho_{0 \mid t}}\left[f_{\xi}\left(y_{0}\right) \mid y_{t}=z\right]\right)+\Delta \rho_{t}(z)\right)\log\brr{\frac{\rho_t}{\pi}}(z)\rmd z\\
      &=-\int_{\RR^d}\inner{\mathbb{E}_{\rho_{0 \mid t}}\left[f_{\xi}\left(y_{0}\right) \mid y_{t}=z\right]+\nabla\log(\rho_t)(z)}{\nabla\log\brr{\frac{\rho_t}{\pi}}(z)}\rho_t(z)\rmd z\\
      &=-\int_{\RR^d}\Big({\nabla\log\brr{\frac{\rho_t}{\pi}}(z)-\nabla\log\brr{\frac{\rho_t}{\pi}}(z)+\mathbb{E}_{\rho_{0 \mid t}}\left[f_{\xi}\left(y_{0}\right) \mid y_{t}=z\right]+\nabla\log(\rho_t)(z)}\Big)^{\top}\\ 
      & \hspace{2cm} \times
      {\nabla\log\brr{\frac{\rho_t}{\pi}}(z)}\rho_t(z)\rmd z\\
      &=-\int_{\RR^d}\inner{\nabla\log\brr{\frac{\rho_t}{\pi}}(z)+\mathbb{E}_{\rho_{0 \mid t}}\left[f_{\xi}\left(y_{0}\right) \mid y_{t}=z\right]-\nabla F(z)}{\nabla\log\brr{\frac{\rho_t}{\pi}}(z)}\rho_t(z)\rmd z.\\
        \end{aligned}
  \end{equation}
  We recall the definition of Fisher information to bound the first term of the scalar product:
  \begin{equation}
    \begin{aligned} 
      \dif{\KL{\rho_t}}
      &\leq-\FS{\rho_t}-\int_{\RR^d}\inner{\mathbb{E}_{\rho_{0 \mid t}}\left[f_{\xi}\left(y_{0}\right) \mid y_{t}=z\right]-\nabla F(z)}{\nabla\log\brr{\frac{\rho_t}{\pi}}(z)}\rho_t(z)\rmd z.
    \end{aligned}
  \end{equation}
  From the Cauchy-Schwartz inequality, we deduce 
  \begin{equation}
    \begin{aligned} 
      \dif{\KL{\rho_t}}
      &\leq -\FS{\rho_t}+\frac{1}{4}\FS{\rho_t} + 
      \int_{\RR^d}\norm{\mathbb{E}_{\rho_{0 \mid t}}\left[f_{\xi}\left(y_{0}\right) \mid y_{t}=z\right]-\nabla F(z)}^2\rho_t(z)\rmd z\\
      & = -\frac{3}{4}\FS{\rho_t}+\Exp{\normsq{\mathbb{E}\left[f_{\xi_k}(y_0)-\nabla F(y_t)\mid y_t\right]}}\\
      & \leq-\frac{3}{4}\FS{\rho_t}+\Exp{\mathbb{E}\left[\normsq{f_{\xi_k}(y_0)-\nabla F(y_t)}\mid y_t\right]}\\
      & = -\frac{3}{4}\FS{\rho_t}+\Exp{\normsq{f_{\xi_k}(y_0)-\nabla F(y_t)}}.
    \end{aligned}
  \end{equation}
    This  concludes the proof of the lemma.

  \subsection{Proof of \Cref{lem:dif-kl-fi}}
  If we replace $f_{\xi_k}(y_0)$ by $g_0$ in \eqref{eq:ef21-lmc-tytytyti},  we will have
  \begin{equation*}
    \begin{aligned}
      \dif{\KL{\rho_t}}&\leq -\frac{3}{4}\FS{\rho_t}+\Exp{\normsq{\nabla F(y_t)-g_0}}\\
      &\leq -\frac{3}{4}\FS{\rho_t}+2\Exp{\normsq{\nabla F(y_t)-\nabla F(y_0)}}+2\Exp{\normsq{\nabla F(x_0)-g_0}}\\
      &=-\frac{3}{4}\FS{\rho_t}+2{\Exp{\normsq{\nabla F(y_t)-\nabla F(x_0)}}}
      +2\Exp{\normsq{\frac{1}{n}\sum_{i=1}^{n} \brc{\nabla F_i(x_0)-g^i_0}}}\\
      &\leq-\frac{3}{4}\FS{\rho_t}+2{\Exp{\normsq{\nabla F(y_t)-\nabla F(x_0)}}}+2\bG^{\rm D}_0.
    \end{aligned}
  \end{equation*}
  Here the last implication is due to Jensen's inequality.
  Let us bound the second term. 
  The smoothness of the gradient yields
  \begin{equation}
    \label{eq:grad_ytx0-p1}
    \begin{aligned}
      \mathbb{E}\left[\normsq{\nabla F(y_t)-\nabla F(x_0)}\right]
      \leq L^2\mathbb{E}\left[\normsq{ y_t-x_0}\right]=L^2\mathbb{E}\left[\normsq{t g_0+\sqrt{2}\left(B_{t}-B_{0}\right)}\right].
    \end{aligned}
  \end{equation}
  Since the Brownian process has independent increments we get
  \begin{equation}
    \label{eq:grad_ytx0-p2}
    \begin{aligned}
      \mathbb{E}\left[\normsq{\nabla F(y_t)-\nabla F(x_0)}\right]
     &\leq L^2t^2\normsq{g_0}+2tL^2d \\
     &\leq L^2\gamma^2\normsq{g_0}+2hL^2d\\
     &= L^2\mathbb{E}\left[\normsq{ x_{1}-x_0}\right].
    \end{aligned}
  \end{equation}
  This concludes the proof.

  \subsection{Proof of \Cref{lem:exp_x_k+1_x_k}}

Let us apply \Cref{lem:Chew} to bound the term $\Exp{\normsq{x_{k+1}-x_k}}$:
\begin{equation*}
  \begin{aligned}
    \Exp{\normsq{x_{k+1}-x_k}}&=\gamma^2\Exp{\normsq{g_k}}+2d\gamma\\
    & \leq 2\gamma^2\left(\Exp{\normsq{\nabla F(x_k)}} + \Exp{\normsq{\nabla F(x_k)-g_k}}\right)+2d\gamma\\
    & \leq 2\gamma^2 \Exp{\normsq{\nabla F(x_k)}}+2\gamma^2\bG^{\rm D}_k+2d\gamma\\
    & \leq 4\gamma^2\left(\Exp{\norm{\nabla F(y_t)}}+\Exp{\normsq{\nabla F(y_t)-\nabla F(x_k)}}\right)+2\gamma^2 
    \bG^{\rm D}_k+2d\gamma\\
    & \leq 4\gamma^2\Exp{\norm{\nabla F(y_t)}}+4L^2\gamma^2\Exp{\normsq{x_{t}-x_k}}+2\gamma^2\bG^{\rm D}_k+2d\gamma\\
    & \leq 4\gamma^2\Exp{\norm{\nabla F(y_t)}}+4L^2\gamma^2\Exp{\normsq{x_{k+1}-x_k}}+2\gamma^2\bG^{\rm D}_k+2d\gamma.
  \end{aligned}
\end{equation*}
Regrouping the terms we obtain 
\begin{equation*}
  \begin{aligned}
    (1 - 4L^2\gamma^2)\Exp{\normsq{x_{k+1}-x_k}}
    & \leq 4\gamma^2\Exp{\norm{\nabla F(y_t)}} + 2\gamma^2\bG^{\rm D}_k + 2d\gamma.
  \end{aligned}
\end{equation*}
Dividing both sides on $1 - 4L^2\gamma^2$ and recalling that $2\sqrt{2}L\gamma < 1$, we conclude the proof.

\subsection{Proof of \Cref{lem:gamma-cond}}\label{proof:lem:gamma-cond}

    Is sufficient to show that 
    \begin{equation*}
      \gamma^2 \leq \min\brc{\frac{1}{192 L^2 },   \frac{\alpha_{\rm P}}{240C\lambda_2}}.
    \end{equation*}
    From the assumption of the theorem, we know that $\gamma^2 \leq \frac{1}{192 L^2 }$. Thus it remains to show that $\gamma^2$ is bounded by the minimum of the other two terms:
    \begin{equation*}
      \begin{aligned}
        \gamma^2 
        &\leq \frac{\alpha_{\rm P}}{240 C\lambda_2} = \frac{\alpha_{\rm P}\brr{e^{-\mu \gamma} -\lambda_1}}{510\lambda_2}.
      \end{aligned}
    \end{equation*}

   
    Since $u = 1$ and $s=q$ we have the following bound on $\lambda_2$:
    \begin{equation*}
      \begin{aligned}
        \lambda_2 
        &\leq  \brs{2(1+q)(1+q^{-1})
             + \brr{2(1+q)(1+q^{-1})+ (1+q^{-1})} (1 + w^{-1})}\bar{L} \\
        &=  \brs{2(2+q+q^{-1})
             + \brr{2(2+q+q^{-1}) + (1+q^{-1})} (1 + w^{-1})}\bar{L} \\
        &= \frac{1}{q} \brs{2(2q+q^2+1)
             + \brr{2(2q+q^2+1) + (q+1)} (1 + w^{-1})}\bar{L} \\
        &\leq \frac{1}{qw} { {5(q+1)^2} (1 + w)}\bar{L} \\
        &\leq \frac{5}{qw}  \frac{\brr{1-\frac{\alpha_{\rm D}}{2} }\brr{1-\frac{\alpha_{\rm P}}{2}}}{\brr{1  - \alpha_{\rm P}}\brr{1  - \alpha_{\rm D}}} \bar{L}. 
      \end{aligned}
    \end{equation*}

    Therefore, we have an upper bound on $\lambda_2$. This means that it is sufficient for us to prove 
     \begin{equation*}
      \begin{aligned}
        \gamma^2 
        &\leq \frac{\alpha_{\rm P}\brr{e^{-\mu \gamma} -\lambda_1}}{510\frac{5}{qw}  \frac{\brr{1-\frac{\alpha_{\rm D}}{2} }\brr{1-\frac{\alpha_{\rm P}}{2}}}{\brr{1  - \alpha_{\rm P}}\brr{1  - \alpha_{\rm D}}} \bar{L}} 
        & = \frac{{qw}\alpha_{\rm P}\brr{e^{-\mu \gamma} -\lambda_1}}
        {2550 \bar{L}} 
        \cdot\frac{\brr{1  - \alpha_{\rm P}}\brr{1  - \alpha_{\rm D}}}{\brr{1-\frac{\alpha_{\rm D}}{2} }\brr{1-\frac{\alpha_{\rm P}}{2}}}.
      \end{aligned}
    \end{equation*}
    From $\mu\gamma < \min\brc{\alpha_{\rm D},\alpha_{\rm P}}/4$ and $e^t > 1 + t$, we deduce
    $e^{-\mu \gamma} -\lambda_1 > \alpha_{\rm D}/4$.
    Combining these inequalities with \eqref{eq:qw}, we deduce that it is sufficient to prove
     \begin{equation*}
      \begin{aligned}
        \gamma^2 
        &\leq  \frac{qw\alpha_{\rm D}\alpha_{\rm P} \brr{1  - \alpha_{\rm P}}\brr{1  - \alpha_{\rm D}}}{10200\brr{1-\frac{\alpha_{\rm D}}{2} }\brr{1-\frac{\alpha_{\rm P}}{2}} \bar{L}}.
      \end{aligned}
    \end{equation*}
    Finally, using \eqref{eq:qw} once again, we derive 
    \begin{equation*}
      qw \geq \frac{\alpha_{\rm P}\alpha_{\rm D}}{24(1-\alpha_{\rm P})(1-\alpha_{\rm D})}.
    \end{equation*}
    Therefore, 
     \begin{equation*}
      \begin{aligned}
        \gamma^2 
        &\leq  \frac{\alpha_{\rm D}^2\alpha_{\rm P}^2 }{244800\brr{1-\frac{\alpha_{\rm D}}{2} }\brr{1-\frac{\alpha_{\rm P}}{2}} \bar{L}}.
      \end{aligned}
    \end{equation*}
    Taking square root on both sides we obtain
     \begin{equation*}
      \begin{aligned}
        \gamma 
        &\leq  \frac{\alpha_{\rm D}\alpha_{\rm P} }{495\sqrt{\brr{1-\frac{\alpha_{\rm D}}{2} }\brr{1-\frac{\alpha_{\rm P}}{2}} \bar{L}}}.
      \end{aligned}
    \end{equation*}
    This concludes the proof.

\end{document}